\documentclass[10pt]{article} 
\usepackage[accepted]{tmlr/tmlr}


\usepackage{amsmath,amsfonts,bm}









\def\eqref#1{equation~\ref{#1}}









\def\floor#1{\lfloor #1 \rfloor}
\def\1{\bm{1}}










\DeclareMathAlphabet{\mathsfit}{\encodingdefault}{\sfdefault}{m}{sl}
\SetMathAlphabet{\mathsfit}{bold}{\encodingdefault}{\sfdefault}{bx}{n}











\newcommand{\E}{\mathbb{E}}



\DeclareMathOperator*{\argmin}{arg\,min}



\usepackage{amsthm, amsmath, amssymb, bbm, mathabx}
\allowdisplaybreaks 
\usepackage{graphicx}
\usepackage{enumerate}
\usepackage{pifont}
\usepackage{booktabs}
\usepackage{multirow}
\usepackage{multicol}
\usepackage{stfloats}
\usepackage{xspace}
\usepackage{algorithm}
\usepackage{algorithmic}
\usepackage{thmtools}
\usepackage{thm-restate}
\usepackage{hyperref}
\usepackage{cleveref}
\usepackage{anyfontsize} 
\usepackage{makecell}
\usepackage{hhline}
\usepackage{colortbl}
\usepackage{xpatch}
\usepackage{scalerel,stackengine}

\usepackage{natbib}


\makeatletter
\def\@fnsymbol#1{\ensuremath{\ifcase#1\or *\or \dagger\or \ddagger\or
  \mathsection\or \mathparagraph\or \|\or \diamond \or **\or \dagger\dagger
  \or \ddagger\ddagger \else\@ctrerr\fi}}
\makeatother

\makeatletter
\newcommand{\printfnsymbol}[1]{%
  \textsuperscript{\@fnsymbol{#1}}%
}
\makeatother


\newtheorem{theorem}{Theorem}
\newenvironment{ntheorem}[1]
  {\theorem}
  {\endtheorem}
\newtheorem{lemma}[theorem]{Lemma}

\newtheorem{definition}[theorem]{Definition}

\theoremstyle{definition}
\newtheorem{remark}{Remark}


\newcommand{\II}{\mathbbm{1}}
\renewcommand{\P}{\mathbb{P}}

\newcommand\myeqi{\mathrel{\stackrel{\makebox[0pt]{\mbox{\normalfont\tiny (i)}}}{=}}}



\newcommand{\wt}[1]{\widetilde{#1}}
\newcommand{\wh}[1]{\widehat{#1}}


\newcommand{\cG}{\mathcal{G}}
\newcommand{\cS}{\mathcal{S}}
\newcommand{\cA}{\mathcal{A}}
\newcommand{\cM}{\mathcal{M}}

\newcommand{\cH}{\mathcal{H}}
\newcommand{\SA}{\mathcal{S} \times \mathcal{A}}

\newcommand{\e}{\textup{e}}
\newcommand{\err}{\textup{err}}

\newcommand{\diag}{\textup{diag}}
\newcommand{\Unif}{\textup{Unif}}

\newcommand{\iid}{\stackrel{\textup{i.i.d.}}{\sim}}
\newcommand{\para}[1]{\noindent{\bf #1}}


\newcommand{\bbR}{\mathbb{R}}


\usepackage[colorinlistoftodos, textwidth=18mm]{todonotes}

\newcommand{\algocomment}[1]{\textcolor{gray}{// #1}}

\def\shownotes{1}
\ifnum\shownotes=0
\newcommand{\todorz}[1]{}
\newcommand{\todorzout}[1]{}
\newcommand{\todossdout}[1]{}
\newcommand{\todossd}[1]{}

\newcommand{\todoydt}[1]{}
\else
\newcommand{\todorz}[1]{\todo[color=blue!10, inline]{\small RZ: #1}}
\newcommand{\todorzout}[1]{\todo[color=blue!10]{\scriptsize RZ: #1}}
\newcommand{\todossdout}[1]{\todo[color=red!10]{\scriptsize SSD: #1}}
\newcommand{\todossd}[1]{\todo[color=red!10, inline]{\small SSD: #1}}

\newcommand{\todoydt}[1]{\todo[color=red!10, inline]{\small Yuandong: #1}}
\fi

\newcommand{\rebut}[1]{#1}
\newcommand{\newrebut}[1]{#1}

\usepackage{float}

\usepackage{hyperref}
\usepackage{url}

\title{Understanding Curriculum Learning in Policy Optimization for Online Combinatorial Optimization}


\author{\name Runlong Zhou \email vectorzh@cs.washington.edu \\
      \addr University of Washington
      \ttAND
      \name Zelin He \email zelinh2@uw.edu \\
      \addr University of Washington
      \ttAND
      \name Yuandong Tian \email yuandong@fb.com\\
      \addr Facebook AI Research
        \ttAND Yi Wu \email jxwuyi@gmail.com\\
        \addr Tsinghua University
        \ttAND Simon S. Du \email ssdu@cs.washington.edu\\
        \addr University of Washington
}



\begin{document}

\maketitle

\begin{abstract}
Over the recent years, reinforcement learning (RL) 
starts to show promising results in \rebut{tackling} combinatorial optimization (CO) problems, in particular when coupled with curriculum learning to facilitate training.
Despite emerging empirical evidence, theoretical study on why RL helps is still at its early stage.  
This paper presents the first systematic study on policy optimization methods for \rebut{online} CO problems.
We show that \rebut{online} CO problems can be naturally formulated as latent Markov Decision Processes (LMDPs), and prove convergence bounds on natural policy gradient (NPG) for solving LMDPs.
Furthermore, our theory explains the benefit of curriculum learning: it can find a strong sampling policy and reduce the distribution shift, a critical quantity that governs the convergence rate in our theorem.
For a canonical \rebut{online} CO problem, the Best Choice Problem (BCP), we formally prove that distribution shift is reduced \emph{exponentially} with curriculum learning \emph{even if the curriculum is a randomly generated BCP on a smaller scale}. 
Our theory also shows we can simplify the curriculum learning scheme used in prior work from multi-step to single-step.
Lastly, we provide extensive experiments on the Best Choice Problem, Online Knapsack, and AdWords to verify our findings.

\end{abstract}

\section{Introduction}
\label{sec:intro}
In recent years, machine learning techniques have shown promising results in solving combinatorial optimization (CO) problems, including traveling salesman problem (TSP, \citet{paper_tsp}), maximum cut \citep{paper_maxcut} and satisfiability problem~\citep{paper_sat}.
While in the worst case some CO problems are NP-hard, in practice, the probability that we need to solve the worst-case problem instance is low \citep{paper_co_gnn}. 
Machine learning techniques are able to find generic models \rebut{which have exceptional performance on} the majority of a class of CO problems.

A significant subclass of CO problems is called online CO problems, which has gained much attention \citep{paper_online_co_real_time, paper_online_co_nonlinear_obj, paper_stochastic_online_co}.
Online CO problems entail a sequential decision-making process, which perfectly matches the nature of reinforcement learning (RL).

This paper concerns using RL to \rebut{tackle online} CO problems.
RL is often coupled with specialized techniques including (a particular type of) Curriculum Learning \citep{paper_new_dog}, human feedback and correction (\citet{paper_dcoach}, \citet{paper_ppmp}), and policy aggregation (boosting, \citet{paper_rl_boosting}).
Practitioners use these techniques to accelerate the training speed.

While these hybrid techniques enjoy empirical success, the theoretical understanding is still limited: it is unclear when and why they improve the performance.
In this paper, we particularly focus on \emph{RL with Curriculum Learning} (\citet{paper_curl}, also named ``bootstrapping'' in \citet{paper_new_dog}): train the agent from an easy task and \emph{gradually} increase the difficulty until the target task.
Interestingly, these techniques exploit the special structures of \rebut{online} CO problems.

\textbf{Main contributions.}
In this paper, we initiate the formal study on using RL \rebut{to tackle online} CO problems, with a particular emphasis on understanding the specialized techniques developed in this emerging subarea.
Our contributions are summarized below.

$\bullet$ \textbf{Formalization.} 
For \rebut{online} CO problems, we want to learn a single policy that enjoys good performance over \emph{a distribution} of problem instances. This motivates us to use Latent Markov Decision Process (LMDP)~\citep{paper_latent_mdp} instead of standard MDP formulation. 
We give concrete examples, \emph{the Best Choice Problem} (BCP, also known as \emph{the Secretary Problem}), \emph{Online Knapsack}, and \emph{AdWords} (\emph{Online Matching and Ad Allocation}, ADW), to show how LMDP models \rebut{online} CO problems.
With this formulation, we can systematically analyze RL algorithms.

$\bullet$ \textbf{Provable efficiency of policy optimization.}
By leveraging recent theory on Natural Policy Gradient for standard MDP \cite{paper_npg}, we analyze the performance of NPG for LMDP.
The performance bound is characterized by the number of iterations, the excess risk of policy evaluation, the transfer error, and the relative condition number $\kappa$ that characterizes the distribution shift between the sampling policy and the optimal policy.
We also take into account the effect of entropy regularization.
To our knowledge, this is the first performance bound of policy optimization methods on LMDP.

$\bullet$ \textbf{Understanding and simplifying Curriculum Learning.}
Using our performance guarantee on NPG for LMDP, we study when and why Curriculum Learning is beneficial to RL for \rebut{online} CO problems.
Our main finding is that the main effect of Curriculum Learning is to give \emph{a stronger sampling policy}.
Under certain circumstances, Curriculum Learning reduces the relative condition number $\kappa$, improving the convergence rate.
For BCP, we provably show that Curriculum Learning can \emph{exponentially reduce} $\kappa$ compared with using the na\"ive sampling policy.
Surprisingly, this means \emph{even a randomly constructed curriculum of BCP accelerates the training exponentially}.
As a direct implication, we show that the multi-step Curriculum Learning proposed in \cite{paper_new_dog} can be significantly \emph{simplified into a single-step scheme}.
Lastly, to obtain a complete understanding, we study the failure mode of Curriculum Learning, in a way to help practitioners to decide whether to use Curriculum Learning based on their prior knowledge.
To verify our theories, we conduct extensive experiments on three classical \rebut{online} CO problems [BCP, Online Knapsack (decision version, OKD), and ADW (decision version)] and carefully track the dependency between the performance of the policy and $\kappa$.

\section{Related Works}
\label{sec:rel}

\newrebut{\para{Combinatorial Optimization problems.}
CO has been a long lasting field of people's interest.
There are a rich literature regarding CO problems such as traveling salesman problem (\citet{paper_tsp_survey_1,paper_tsp_survey_2}), maximum cut (\citet{paper_max_cut_survey_1,paper_max_cut_survey_2}), and satisfiability problem (\citet{paper_sat_survey_1,paper_sat_survey_2}).}

\para{RL for CO.}
There have been rich literature studying RL for CO problems, e.g., using Pointer Network in REINFORCE and Actor-Critic for routing problems \citep{paper_vrp}, combining Graph Attention Network with Monte Carlo Tree Search \rebut{for} TSP \citep{paper_co_realworld} and incorporating Structure-to-Vector Network in Deep Q-networks \rebut{for} maximum independent set problems \citep{paper_co_drl}.
\citet{paper_rl_co_nn} proposed a framework to tackle CO problems using RL and neural networks.
\citet{paper_tsp} combined REINFORCE and attention technique to learn routing problems.
\citet{paper_co_graph_survey} and \citet{paper_rl_co_survey} are taxonomic surveys of RL approaches for graph problems.
\citet{paper_ml_co} summarized learning methods, algorithmic structures, objective design and discussed generalization. In particular scaling to larger problems was mentioned as a major challenge.
Compared to supervised learning, RL not only mimics existing heuristics, but also discover novel ones that humans have not thought of, for example chip design \citep{paper_chip_design} and compiler optimization \citep{paper_ml_compile}. 
Theoretically, there is a line of work on analyzing data-driven approach to combinatorial problems~\citep{balcan2020data}.
However, to our knowledge, the theoretical analysis for RL-based method is still missing.

\citet{paper_new_dog} focused on using RL to \rebut{tackle} \emph{online} CO problems, which means that the agent must make sequential and irrevocable decisions.
They encoded the input in a length-independent manner.
For example, the $i$-th element of a $n$-length sequence is encoded by the fraction $i / n$ and other features, so that the agent can generalize to unseen $n$, paving the way for Curriculum Learning.
Three online CO problems were mentioned in their paper: ADW, Online Knapsack, and BCP.
Currently, Online Matching (ADW) and Online Knapsack have only approximation algorithms \citep{paper_om_hard, paper_ok_hard}.
There are also other works about RL for online CO problems. 
\citet{paper_drl_om} uses deep-RL for Online Matching.
\citet{paper_solo_online_co} studies Parallel Machine Job Scheduling problem (PMSP) and Capacitated Vehicle Routing problem (CVRP), which are both online CO problems, using offline-learning and Monte Carlo Tree Search.

\para{LMDP.}
We provide the exact definition of LMDP in \Cref{sec:lmdp}.
As studied by \citet{paper_multimodel_mdp}, in the general cases, optimal policies for LMDPs are \emph{history-dependent}.
This is different from standard MDP cases where there always exists an optimal \emph{history-independent} policy.
They showed that even finding the optimal history-independent policy is \emph{NP-hard}.
\citet{paper_latent_mdp} investigated the sample complexity and regret bounds of LMDP \rebut{in the history-independent policy class}.
They presented an exponential lower-bound for a general LMDP and derived algorithms with polynomial sample complexities for cases with special assumptions.
\citet{paper_reward_mixing} showed that in reward-mixing MDPs, where MDPs share the same transition model, a polynomial sample complexity is achievable without any assumption \rebut{to find an optimal history-independent policy}.

\para{Convergence rate for policy gradient methods.}
There is line of work on the convergence rates of policy gradient methods for standard MDPs (\citet{paper_pg_linear}, \citet{paper_pg_neural}, \citet{paper_pg_npg_improved}, \citet{paper_pg_momentum}, \citet{paper_reinforce_sample}).
For \emph{softmax tabular parameterization}, NPG can obtain an $O(1 / T)$ rate \citep{paper_npg} where $T$ is the number of iterations; with entropy regularization, both PG and NPG achieves linear convergence \citep{paper_softmax_pg,paper_npg_tab_reg}.
For \emph{log-linear policies}, sample-based NPG makes an $O(1 / \sqrt{T})$ convergence rate, with assumptions on $\epsilon_{\textup{stat}}, \epsilon_{\text{bias}}$ and $\kappa$ \citep{paper_npg} (see our \Cref{asp_main}); exact NPG with entropy regularization enjoys a linear convergence rate up to $\epsilon_{\text{bias}}$ \citep{paper_npg_logl_reg}. 
We extend the analysis to LMDP.

\para{Curriculum Learning.}
There are a rich body of literature on Curriculum Learning \citep{paper_curl_robust,paper_curl_docl,paper_curl_dihcl, paper_curl_copilot,paper_curl_master_rate,paper_curl_auto}.
As surveyed in \citet{paper_curl}, Curriculum Learning has been applied to training deep neural networks and non-convex optimizations and improves the convergence in several cases.
\citet{paper_curl_rl} rigorously modeled curriculum as a directed acyclic graph and surveyed work on curriculum design. 
\citet{paper_new_dog} proposed a bootstrapping (Curriculum Learning) approach: gradually increase the problem size after the model works sufficiently well on the current problem size.

\section{Motivating \rebut{Online} CO Problems}

\rebut{Online} CO problems are a natural class of problems that admit constructions of small-scale instances, because the hardness of them can be characterized by the input length, and instances of different scales are similar.
This property simplifies the construction of curricula and underscores curriculum learning.
We also believe \rebut{online} CO problems make the use of LMDP suitable, because under a proper distribution $\{ w_m \}$, 
\emph{instances in a large portion of the probability space have similar near optimal solutions}.

In this section we introduce three motivating \rebut{online} CO problems.
We are interested in these problems because they have all been extensively studied. 
Furthermore, they were studied in \citet{paper_new_dog}, the paper that motivates our work.
They also have real-world applications, e.g., auction design \citep{paper:knapsack_secretary_application} and advertisement targeting \citep{mehta2007adwords}.

\subsection{The Best Choice Problem (BCP)\protect\footnote{\newrebut{We follow the statement in \citet{paper_new_dog} that BCP (secretary problem) is a CO problem. It is categorized as an optimal stopping problem.}}} \label{sec_setting_sp}

In BCP, the goal is to maximize the \emph{probability} of choosing the maximum among $n$ \emph{different} numbers, where $n$ is known.
They arrive sequentially and when the $i$-th number shows up, the decision-maker observes the relative ranking $X_i$ among the first $i$ numbers, which means being the $X_i$th-best so far.
A decision that whether to accept or reject the $i$-th number must be made \emph{immediately} when it comes, and such decisions \emph{cannot be revoked}.
Once one number is accepted, the game ends immediately.

The ordering of the numbers is unknown.
There are in total $n!$ permutations, and an instance of BCP is drawn from an unknown distribution over these permutations.
In the classical BCP, each permutation is sampled with equal probability.
The optimal solution for the classical BCP is the well-known \emph{$1/\e$-threshold strategy}: reject all the first $\floor{n/\e}$ numbers, then accept the first one which is the best so-far.
In this paper, we also study some different distributions.

\subsection{Online Knapsack (decision version, OKD)} \label{sec_setting_okd}

In Online Knapsack problems the decision-maker observes $n$ (which is known) items arriving sequentially, each with value $v_i$ and size $s_i$ revealed upon arrival. A decision to either accept or reject the $i$-th item must be made \emph{immediately} when it arrives, and such decisions \emph{cannot be revoked}.
At any time the accepted items should have their total size no larger than a known budget $B$.

The goal of standard Online Knapsack is to maximize the total value of accepted items. In this paper, we study its decision version, whose goal is to maximize the \emph{probability} of total value reaching \emph{a known target $V$}.

We assume that all values and sizes are sampled independently from two fixed distributions, namely $v_1, v_2, \ldots, v_n \iid F_v$ and $s_1, s_2, \ldots, s_n \iid F_s$. In \citet{paper_new_dog} the experiments were carried out with $F_v = F_s = \Unif_{[0, 1]}$, and we also study other distributions.
\begin{remark}
    A challenge in OKD is the sparse reward: the only signal is reward $1$ when the total value of accepted items first exceeds $V$ (see the detailed formulation in \Cref{sec_full_exp_okd}), unlike in Online Knapsack the reward of $v_i$ is given instantly after the $i$-th item is successfully accepted.
    This makes random exploration hardly get reward signals, necessitating Curriculum Learning.
\end{remark}

\subsection{AdWords (decision version, ADW)} \label{sec:setting_adw}

In ADW, there are $n$ advertisers each with budget $1$ and $m$ ad slots.
Each ad slot $j$ arrives sequentially along with a vector $(v_{1, j}, v_{2, j}, \ldots, v_{n, j})$ where $v_{i,j}$ is the value that advertiser $i$ wants to pay for ad slot $j$.
Once an ad slot arrives, it must be \emph{irrevocably} allocated to an advertiser or not allocated at all.
If ad slot $j$ is allocated to advertiser $i$ and the remaining budget of advertiser $i$ is not less than $v_{i, j}$, the total revenue increases by $v_{i,j}$ while advertiser $i$’s budget decreases by $v_{i,j}$.

We assume that for any advertiser $i$, $v_{i, 1}, v_{i, 2}, \ldots, v_{i, m} \iid F_i$.
\citet{paper_new_dog} studied a very special case called online $b$-matching where $F_i$ is a Bernoulli distribution.
We study different distributions.

The objective of the standard ADW is to maximize the total revenue.
For a similar reason as in OKD, we set \emph{a known target $V$} for the decision version.
The goal of ADW is to maximize the \emph{probability} of total revenue reaching $V$.

\section{Problem Setup}

In this section, we first introduce LMDP and why it naturally formulates \rebut{online} CO problems.
Then we list necessary components required by Natural Policy Gradient for LMDP (\Cref{alg_npg}).

\paragraph{Notations.}
For any positive integer $n$, we denote $[n] := \{1, 2, \ldots, n\}$.
For any vector $x \in \bbR^n$, we denote $x^\otimes := x \otimes x = x x^\top$ as the self-outer-product of $x$.
Further for any $y \in \bbR^m$, we denote $(x \circ y) (i, j) := x(i) y(j)$.

\subsection{Latent Markov Decision Process} \label{sec:lmdp}

\rebut{Tackling} an \rebut{online} CO problem entails handling a family of problem instances, and each instance can be modeled as a Markov Decision Process. 
For \rebut{online} CO problems, we want to find one algorithm that works for a family of problem instances and performs well on average over an (unknown) distribution over this family.
To this end, we adopt the concept of Latent MDP which naturally models \rebut{online} CO problems.

Latent MDP \citep{paper_latent_mdp} is a collection of MDPs $\cM = \{\cM_1, \cM_2, \ldots, \cM_M\}$.
All the MDPs share state set $\cS$, action set $\cA$ and horizon $H$.
Each MDP $\cM_m = (\cS, \cA, H, \nu_m, P_m, r_m)$ has its own \rebut{initial state distribution $\nu_m \in \Delta(\cS)$}, transition $P_m: \cS \times \cA \to \Delta(\cS)$ and reward $r_m: \cS \times \cA \to [0, 1]$, where $\Delta(\cS)$ is the probability simplex over $\cS$. Let $w_1, w_2, \ldots, w_M$ be the mixing weights of MDPs such that $w_m > 0$ for any $m$ and $\sum_{m=1}^M w_m = 1$. At the start of every episode, one MDP $\cM_m \in \cM$ is randomly chosen with probability $w_m$.

Due to the time and space complexities of finding the optimal history-dependent policies,
we stay in line with \citet{paper_new_dog} and care only about finding the optimal \emph{history-independent} policy.
Let $\Pi = \{ \pi: \cS \to \Delta(\cA) \}$ denote the class of all the history-independent policies.

\textbf{Log-linear policy.}
Let $\phi: \SA \to \mathbb{R}^d$ be a feature mapping function where $d$ denotes the dimension of feature space.
Assume that $\| \phi(s, a) \|_2 \le B$.
A log-linear policy is of the form:
\begin{align*}
    \pi_\theta (a | s) = \frac{\exp(\theta^\top \phi (s, a))}{\sum_{a' \in \cA} \exp(\theta^\top \phi (s, a'))},
    \text{ where } \theta \in \mathbb{R}^d.
\end{align*}

\begin{remark}
Log-linear parameterization is a generalization of \emph{softmax tabular parameterization} by setting $d = | \cS | | \cA |$ and $\phi (s, a) =$ One-hot$(s, a)$.
They are ``scalable'': if $\phi$ extracts important features from different $\SA$s with a fixed dimension $d \ll  | \cS | | \cA |$, then a single $\pi_\theta$ can generalize. 
\end{remark}

\para{Value function, Q-function and advantage function.}
The expected reward of executing $\pi$ on $M_m$ is defined via value functions.
We incorporate \emph{entropy regularization} for completeness because prior works (especially empirical works) used it to facilitate training.
\emph{Due to space limit, we defer all the entropy regularized notations, algorithm and theorem to \Cref{sec:full_result}.}
We define the value function:
\begin{align*}
    V_{m, h}^\pi (s) := \E_{\cM_m, \pi} \left[ \left. \sum_{t = 0}^{h - 1} r_m (s_t, a_t)\ \right|\ s_0 = s \right],
\end{align*}
where the expectation is with respect to the randomness of trajectory induced by $\pi$ in $\cM_m$.
Denote $V^\pi := \sum_{m=1}^M w_m \sum_{s_0 \in \cS} \nu_m (s_0) V_{m, H}^\pi (s_0)$, then we need to find $\pi^\star = \arg \max_{\pi \in \Pi} V^\pi$.
Denote $V^\star := V^{\pi^\star}$.

The Q-function can be defined in a similar manner:
\begin{align*}
    Q_{m, h}^\pi (s, a) := \E_{\cM_m, \pi} \left[ \left. \sum_{t = 0}^{h - 1} r_m (s_t, a_t) \ \right|\ (s_0, a_0) = (s, a) \right],
\end{align*}
and the advantage function is defined as $A_{m, h}^\pi (s, a) := Q_{m, h}^\pi (s, a) - V_{m, h}^\pi (s)$.

\para{Modeling BCP.}
For BCP, each instance is a permutation of length $n$, and in each round an instance is drawn from an unknown distribution over all permutations.
In the $i$-th step for $i \in [n]$, the state encodes the $i$-th number and its relative ranking so far.
The transition is deterministic according to the problem definition.
A reward of $1$ is given \emph{if and only if} the maximum is accepted.
We model the distribution as follows: for the $i$-th number, it has probability $P_i$ to be the best so-far and is independent of other $i'$.
Hence, the weight of each instance is simply the product of the probabilities on each position.
The classical BCP satisfies $P_i = 1 / i$.

\para{Modeling OKD.}
For OKD, each instance is a sequence of items with values and sizes drawn from unknown distributions $F_v$ and $F_s$.
In the $i$-th step for $i \in [n]$, the state encodes the information of $i$-th item's value and size, the remaining budget, and the remaining target value to fulfill.
The transition is also deterministic according to the problem definition, and a reward of $1$ is given \emph{if and only if} the agent obtains the target value for the first time.
$F_v = F_s = \Unif_{[0, 1]}$ in \citet{paper_new_dog}.

\para{Modeling ADW.}
For ADW, each instance is a $n \times m$ matrix $(v_{i, j})_{(i, j) \in [n] \times [m]}$, with each row $i$ subject to a distribution $F_i$.
In the $j$-th step for $j \in [m]$, the state encodes the value vector $(v_{1, j}, v_{2, j}, \ldots, v_{n, j})$, the remaining budget vector $(B_1, B_2, \ldots, B_n)$, and the remaining target revenue to fulfill.
The transition is also deterministic according to the problem definition, and a reward of $1$ is given \emph{if and only if} the agent obtains the target revenue for the first time.

\subsection{Algorithm components}

In this subsection we will introduce some necessary notations used by our main algorithm.

\begin{definition}[Visitation Distribution]
The state visitation distribution and state-action visitation distribution at step $h \ge 0$ with respect to $\pi$ in $\cM_m$ are defined as
\begin{align*}
    d_{m, h}^{\pi} (s) &:= \P_{\cM_m, \pi} (s_h = s), \\
    d_{m, h}^{\pi} (s, a) &:= \P_{\cM_m, \pi} (s_h = s, a_h = a).
\end{align*}
\end{definition}

We will encounter a grafted distribution $\wt{d}_{m, h}^{\pi} (s, a) = d_{m, h}^{\pi} (s) \circ \Unif_{\cA} (a)$ which in general is not the state-action visitation distribution with respect to any policy.
However, it can be attained by first acting under $\pi$ for $h$ steps to get states then sample actions from the uniform distribution $\Unif_{\cA}$. This distribution will be useful when we apply a variant of NPG, where the sampling policy is fixed.

Denote $d_{m, h}^\clubsuit := d_{m, h}^{\pi_\clubsuit}$ and $d^\clubsuit$ as short for $\{ d_{m, h}^\clubsuit \}_{1 \le m \le M, 0 \le h \le H - 1}$, here $\clubsuit$ can be any symbol.

We also need the following definitions for NPG, which are different from the standard versions for discounted MDP because weights $\{w_m\}$ must be incorporated in the definitions to deal with LMDP.
\rebut{In the following definitions, let $v$ be the collection of any distribution, which will be instantiated by $d^\star$, $d^t$, etc. in the remaining sections.}

\begin{definition}[Compatible Function Approximation Loss] \label{def_func_approx_loss}
Let $g$ be the parameter update weight, then NPG is related to finding the minimizer for the following function:
{\fontsize{8.5}{9.5} 
\begin{align*}
    &L(g; \theta, v) := \\ 
    &\sum_{m=1}^M w_m \sum_{h=1}^H \E_{s, a \sim v_{m, H-h}} \left[ \left( A_{m, h}^{\pi_\theta} (s, a) - g^\top \nabla_\theta \ln \pi_\theta (a | s) \right)^2 \right].
\end{align*}}%
\end{definition}

\begin{definition}[Generic Fisher Information Matrix]
{
\begin{align*}
    &\Sigma_v^\theta := 
    \sum_{m=1}^M w_m \sum_{h=1}^H \E_{s, a \sim v_{m, H-h}} \left[ (\nabla_\theta \ln \pi_\theta (a | s))^\otimes \right].
\end{align*}} %
\end{definition}
Particularly, denote $F(\theta) = \Sigma_{d^\theta}^\theta$ as the Fisher information matrix induced by $\pi_\theta$.

\section{Learning Procedure}

\label{sec_learn_proc}

\rebut{In this section we introduce the algorithms: NPG supporting any customized sampler, and our proposed Curriculum Learning framework.}

\textbf{Natural Policy Gradient.}
The learning procedure generates a series of parameters and policies. Starting from $\theta_0$, the algorithm updates the parameter by setting
$
    \theta_{t + 1} = \theta_t + \eta g_t,
$
where $\eta$ is a predefined constant learning rate, and $g_t$ is the update weight.
Denote $\pi_t := \pi_{\theta_t}, V^t := V^{\pi_t}$ and $A_{m, h}^t := A_{m, h}^{\pi_t}$ for convenience.
We adopt NPG \citep{paper_npg_original} because it is \rebut{efficient in} training parameterized policies and admits clean theoretical analysis.
NPG satisfies
$g_t \in \argmin_g L(g; \theta_t, d^{\theta_t})$ (see \Cref{sec_policy_gradient} for explanation).
When we only have samples, we use the approximate version of NPG: $g_t \approx \argmin_{g \in \cG} L(g; \theta_t, d^{\theta_t})$, where $\cG = \{ x : \| x \|_2 \le G \}$ for some hyper-parameter $G$.

We also introduce a variant of NPG: instead of sampling from $d^{\theta_t}$ using the current policy $\pi_t$, we sample from $\wt{d}^{\pi_s}$ using a \emph{fixed} sampling policy $\pi_s$.
The update rule is $g_t \approx \argmin_{g \in \cG} L(g; \theta_t, \wt{d}^{\pi_s})$.
This version makes a closed-form analysis for BCP possible.

The main algorithm is shown in \Cref{alg_npg}.
It admits two types of training:
\ding{172} If $\pi_s =$ None, it calls \Cref{alg_samp} (deferred to \Cref{sec:full_result}) to sample $s, a \sim d^{\theta_t}$; 
\ding{173} If $\pi_s \ne$ None, it then calls \Cref{alg_samp} to sample $s, a \sim \wt{d}^{\pi_s}$.
\Cref{alg_samp} also returns an unbiased estimation of $A_{H - h}^{\pi_t} (s, a)$.

In both cases, we denote $d^t$ as the sampling distribution and $\Sigma_t$ as the induced Fisher Information Matrix used in step $t$, i.e. $d^t := d^{\theta_t}, \Sigma_t := F(\theta_t)$ if $\pi_s =$ None; $d^t := \wt{d}^{\pi_s}, \Sigma_t := \Sigma_{\wt{d}^{\pi_s}}^{\theta_t}$ otherwise. The update rule can be written in a unified way as
$
    g_t \approx \argmin_{g \in \cG} L(g; \theta_t, d^t).
$
This is equivalent to solving a constrained quadratic optimization and we can use existing solvers.

\begin{remark}
\Cref{alg_npg} is different from Algorithm\,4 of \citet{paper_npg} \rebut{in that we} use a ``batched'' update while they used successive Projected Gradient Descents (PGD). This is an important implementation technique to speed up training in our experiments.
\end{remark}

\textbf{Curriculum Learning.}
We use Curriculum Learning to facilitate training.
\Cref{alg_curl} is our proposed training framework, which first constructs an easy environment $E'$ and trains a (near-)optimal policy $\pi_s$ of it.
\newrebut{The design of $E'$ is problem-dependent.
For the problems described in this paper (BCP, OKD, and ADW) as well as any similar problems (online load balancing, online set cover, etc.), we can use $n$, the sequence length of online decision-making, to represent the difficulty.
For these problems, we construct $E'$ to be the environment with $n$ smaller than that of $E$.
For other problems, we first find the hyperparameters controlling the difficulty of the problem, e.g., the sequence length, the action space size, the number of interaction steps, then reduce these hyperparameters to construct a smaller scale and simpler problem.}

In the target environment $E$, we either use $\pi_s$ to sample data while training a new policy from scratch, or simply continue training $\pi_s$.
To be specific and provide clarity for the results in \Cref{sec_exp}, we name a few training modes (without regularization) here, and the rest are in \Cref{tab_training_scheme} in \Cref{sec_full_exp}.

\texttt{curl}, the standard Curriculum Learning, runs \Cref{alg_curl} with $samp =$ \texttt{pi\_t};
\texttt{fix\_samp\_curl} stands for the fixed sampler Curriculum Learning, running \Cref{alg_curl} with $samp =$ \texttt{pi\_s}.
\texttt{direct} means directly learning in $E$ without curriculum, i.e., running \Cref{alg_npg} with $\pi_s =$ None;
\texttt{naive\_samp} also directly learns in $E$, while using $\pi_s =$ na\"ive random policy to sample data in \Cref{alg_npg}.

\begin{algorithm}[ht]
\caption{\texttt{NPG} (Full version: \Cref{alg_npg_full})}
\label{alg_npg}
\begin{algorithmic}[1]
\small

\STATE {\bfseries Input:} Environment $E$; learning rate $\eta$; episode number $T$; batch size $N$; initialization $\theta_0$; sampler $\pi_s$; optimization domain $\cG$.

\FOR{$t \gets 0, 1, \ldots, T - 1$}
    \STATE For $0 \le n \le N - 1$ and $0 \le h \le H - 1$, sample $(a_h^{(n)}, s_h^{(n)})$ and estimate $\wh{A}_{H - h}^{(n)}$ using \Cref{alg_samp}.
    \STATE Calculate:
    \begin{align*}
        \wh{F}_t &\gets \sum_{n = 0}^{N - 1} \sum_{h = 0}^{H - 1} ( \nabla_\theta \ln \pi_{\theta_t} (a_h^{(n)} | s_h^{(n)}) )^\otimes, \\
        \wh{\nabla}_t &\gets \sum_{n = 0}^{N - 1} \sum_{h = 0}^{H - 1} \wh{A}_{H - h}^{(n)} \nabla_\theta \ln \pi_{\theta_t} (a_h^{(n)} | s_h^{(n)}).
    \end{align*}
    \STATE Call any solver to get $\wh{g}_t \gets \argmin_{g \in \cG} g^\top \wh{F}_t g - 2 g^\top \wh{\nabla}_t$.
    \STATE Update $\theta_{t + 1} \gets \theta_t + \eta \wh{g}_t$.
\ENDFOR
\STATE {\bfseries Return:} $\theta_T$.
\end{algorithmic}
\end{algorithm}

\begin{algorithm}[ht]
\caption{Curriculum learning framework.}
\label{alg_curl}
\begin{algorithmic}[1]
\small

\STATE {\bfseries Input:} Environment $E$; learning rate $\eta$; episode number $T$; batch size $N$; sampler type $samp \in \{$ \texttt{pi\_s, pi\_t} $\}$; 
optimization domain $\cG$.

\STATE Construct an environment $E'$ with a task easier than $E$. This environment should have optimal policy similar to that of $E$.

\STATE $\theta_s \gets$ \texttt{NPG} ($E', \eta, T, N, 0^d, \textup{None}, \cG$) (\Cref{alg_npg}).

\IF{$samp = $\texttt{pi\_s}}
    \STATE $\theta_T \gets$ \texttt{NPG} ($E, \eta, T, N, 0^d, \pi_s, \cG$).
\ELSE
    \STATE $\theta_T \gets$ \texttt{NPG} ($E, \eta, T, N, \theta_s, \textup{None}, \cG$).
\ENDIF

\STATE {\bfseries Return:} $\theta_T$.

\end{algorithmic}
\end{algorithm}

\section{Performance Analysis}

Our analysis contains two important components, namely the sub-optimality gap guarantee of the NPG we proposed, and the efficacy guarantee of Curriculum Learning on BCP.
The first component can also be extended to history-dependent policies with features being the \emph{tensor products} of features from each time step (exponentially large).

\subsection{Natural Policy Gradient for Latent MDP}

Let $g_t^\star \in \argmin_{g \in \cG} L(g; \theta_t, d^t)$ denote the true minimizer. We have the following definitions:

\begin{definition} \label{asp_main}
Define for $0 \le t \le T$:

$\bullet$ (Excess risk) $\epsilon_{\textup{stat}} := \max_t \E[L(g_t; \theta_t, d^t) - L(g_t^\star; \theta_t, d^t)]$;
    
$\bullet$ (Transfer error) $\epsilon_{\textup{bias}} := \max_t \E[L(g_t^\star; \theta_t, d^\star)]$;
    
$\bullet$ (Relative condition number) $\kappa := \max_t \E \left[ \sup_{x \in \mathbb{R}^d} \frac{x^\top \Sigma_{d^\star}^{\theta_t} x}{x^\top \Sigma_t x} \right]$.
Note that term inside the expectation is a random quantity as $\theta_t$ is random.

The expectation is with respect to the randomness in the sequence of weights $g_0, g_1, \ldots, g_T$.
\end{definition}

All the quantities are commonly used in literature mentioned in \Cref{sec:rel}. 
$\epsilon_{\textup{stat}}$ is due to that the minimizer $g_t$ from samples may not minimize the population loss $L$.
$\epsilon_{\textup{bias}}$ quantifies the approximation error due to feature maps.
$\kappa$ characterizes the distribution mismatch between $d^t$ and $d^\star$ and is \emph{a key quantity in Curriculum Learning} and will be studied in more details in the following sections.

Our main result is based on a fitting error which depicts the closeness between $\pi^\star$ and any policy $\pi$.

\begin{definition}[Fitting Error] \label{def:err_t}
Suppose the update rule is $\theta_{t+1} = \theta_t + \eta g_t$, define
{\fontsize{8.5}{9.5}
\begin{align*}
    \err_t := \sum_{m=1}^M w_m \sum_{h = 1}^H \E_{(s, a) \sim d_{m,H - h}^\star} \left[ A_{m,h}^t (s, a) - g_t^\top \nabla_\theta \ln \pi_t (a | s) \right].
\end{align*}}%
\end{definition}

\Cref{thm_main} shows the convergence rate of \Cref{alg_npg}, and its proof is deferred to \Cref{sec_proof_main_thm}.

\begin{theorem} \label{thm_main}
With \Cref{asp_main,def:err_t,def:lyapunov_potential}, \Cref{alg_npg} enjoys the following performance bound:
{\fontsize{8.3}{9.5}
\begin{align*}
    \E \left[ \min_{0 \le t \le T} V^\star - V^t \right]
    &\le \frac{\Phi (\pi_0)}{\eta T} + \eta \frac{B^2 G^2}{2} + \frac{1}{T} \sum_{t = 0}^T \E[\err_t] \\
    &\le \frac{\Phi (\pi_0)}{\eta T} + \eta \frac{B^2 G^2}{2} + \sqrt{H \epsilon_{\textup{bias}}} + \sqrt{H \kappa \epsilon_{\textup{stat}}},
\end{align*}}%
where $\Phi (\pi_0)$ is the Lyapunov potential function which is only relevant to the initialization.
\end{theorem}

\begin{remark} \label{rem:thm_main}
    \ding{172} For the results of sample-based NPG with entropy regularization for LMDP, please see \Cref{sec:full_result}.
    \ding{173} Taking $\eta = \Theta (1 / \sqrt{T})$ gives an $O(1/\sqrt{T})$ rate, matching the result in \citet{paper_npg}.
    \ding{174} $\epsilon_{\textup{stat}}$ can be reduced using a larger batch size $N$ (\Cref{lem_L_conc}) that $\epsilon_{\textup{stat}} = \wt{O}(1/\sqrt{N})$.
    \ding{175} If some $d_t$ (especially the initialization $d_0$) is far away from $d^\star$, $\kappa$ may be extremely large (\Cref{sec_curl_kappa_sp} as an example).
    If we can find a policy whose $\kappa$ is small with \emph{a single curriculum}, we do not need the \emph{multi-step curriculum} learning procedure used in \citet{paper_new_dog}.
%
\end{remark}

\subsection{Curriculum learning for BCP} \label{sec_curl_kappa_sp}

For BCP, 
there exists a threshold policy that is optimal \citep{paper_sp_theory}. Suppose the threshold is $p \in (0, 1)$, then the policy is: accept the $i$-th number if and only if \newrebut{$i / n > p$} and $X_i = 1$.
For the classical BCP where all the $n!$ instances have equal probability, the optimal threshold is $1/\e$.

To show that curriculum learning makes the training converge faster, \Cref{thm_main} gives a direct hint: \emph{curriculum learning produces a good sampler leading to much smaller $\kappa$ than that of a na\"ive random sampler.}
Here we focus on the cases where $samp =$ \texttt{pi\_s} because the sampler is fixed, while when $samp =$ \texttt{pi\_t} it is impossible to analyze a dynamic procedure.
We show \Cref{thm_kappa_sp} to characterize $\kappa$ in BCP.
Its full statement and proof is deferred to \Cref{sec_proof_kappa_sp}.

\begin{theorem} \label{thm_kappa_sp}
Assume that each number is independent of others and the $i$-th number has a probability $P_i$ of being the maximum so far (\Cref{sec:lmdp}). Assume the optimal policy is a $p$-threshold policy and the sampling policy is a $q$-threshold policy. There exists a policy parameterization such that:
{\fontsize{8.7}{9.5}
\begin{align}
    \kappa_{\textup{curl}}
    &= \Theta \left(
    \left\{ \begin{array}{ll}
        \prod_{j = \floor{n q} + 1}^{\floor{n p}} \frac{1}{1 - P_j}, & q \le p, \\
        1, & q > p,
    \end{array} \right.
    \right),
    \notag
    \\
    \kappa_{\textup{na\"ive}}
    &= \Theta \left( 
    2^{\floor{n p}} \max \left\{ 1, \max_{i \ge \floor{n p} + 2}\prod_{j = \floor{n p} + 1}^{i - 1} 2 (1 - P_j) \right\} \right)
    , 
    \label{eq_kappa}
\end{align}}%
where $\kappa_{\textup{curl}}$ and $\kappa_{\textup{na\"ive}}$ are $\kappa$ of the sampling policy and the na\"ive random policy, respectively.
\end{theorem}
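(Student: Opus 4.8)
\medskip

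The plan is to reduce the computation of $\kappa$ to an explicit ratio of visitation measures, exploiting the very special combinatorial structure of SP: the dynamics are deterministic, each instance is a permutation, and a threshold policy is an extremely rigid object (it rejects everything before the threshold, then commits to the first ``best so far''). First I would set up a convenient policy parameterization. Since the state at step $i$ encodes the index $i$ and the relative rank $X_i$, and a $p$-threshold policy depends only on the indicator $\{i/n > p\}$ and on $\{X_i = 1\}$, I would choose a log-linear feature map $\phi$ that cleanly separates the ``accept'' and ``reject'' logits on exactly these coordinates — e.g.\ one feature block for each ``phase'' $i \le \floor{np}$ versus $i > \floor{np}$ and whether $X_i = 1$ — so that $\nabla_\theta \ln \pi_\theta(a\mid s)$ has a block structure. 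With such a $\phi$, the generic Fisher matrix $\Sigma_v^\theta$ becomes block-diagonal, and the relative condition number $\sup_x \frac{x^\top \Sigma_{d^\star}^{\theta}x}{x^\top \Sigma_{d^t}^{\theta}x}$ decouples into a maximum over blocks of the ratio of the corresponding scalar (or small-matrix) weights. Each such weight is, up to constants depending only on the fixed $\pi_\theta$ (through $\pi_\theta(a\mid s)(1-\pi_\theta(a\mid s))$, which is $\Theta(1)$ once we pick $\theta$ at the optimum and bounded features), just the probability mass that the sampling distribution $v = d^\star$ or $v = d^t$ places on the relevant state, summed over steps. This is why the answer is a ratio of visitation probabilities and why the $\Theta(\cdot)$ (not exact equality) appears.

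\medskip

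The second step is to compute these visitation probabilities in closed form. Under the independence assumption, $\P[X_i = 1] = P_i$ independently across $i$, so the probability of reaching step $i$ still ``in play'' (no acceptance yet) under a $q$-threshold policy is: for $i \le \floor{nq}$ it is $1$ (we never accept before the threshold); for $i > \floor{nq}$ it is $\prod_{j = \floor{nq}+1}^{i-1}(1 - P_j)$ (we survive exactly when none of the intermediate candidates was best-so-far). The state-visitation of $d^\star$ corresponds to the $p$-threshold policy and $d^t = \wt d^{\pi_s}$ to the $q$-threshold sampler, but with the crucial twist from Def.\ of $\wt d$: the sampler's \emph{state} distribution is $d_{m,h}^{\pi_s}$ while the action is uniform. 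So the ``reach'' probabilities for $d^t$ use the $q$-threshold survival probabilities, but then actions are taken uniformly — meaning $d^t$ continues to put mass on states past the threshold even where the $q$-policy would have been forced by its own actions to stop. This asymmetry, versus $d^\star$ which reaches states via the $p$-policy's own (deterministic-in-action) behavior, is what produces the product $\prod_{j=\floor{nq}+1}^{\floor{np}} \frac{1}{1-P_j}$ when $q \le p$ (the sampler's states are ``too sparse'' in the window $(\floor{nq},\floor{np}]$ relative to where the optimal policy needs coverage), and $\Theta(1)$ when $q > p$ (the sampler already covers everything the optimal policy touches). For $\kappa_{\textup{na\"ive}}$, the sampler is the uniform random policy, which at every step accepts with probability $1/|\cA| = 1/2$, so its survival probability to step $i$ decays like $2^{-(i-1)}$ modulated by the $(1 - P_j)$ factors only after the point where it would have been forced to engage; carefully tracking where the $d^\star$ mass sits (concentrated at steps $> \floor{np}$) against this exponentially-thin $d_{\textup{na\"ive}}$ mass gives the $2^{\floor{np}}$ prefactor and the inner $\max$ over $i \ge \floor{np}+2$.

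\medskip

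The third step is just bookkeeping: take the maximum over blocks/steps of these ratios, absorb the $\Theta(1)$ factors coming from $\pi_\theta(1-\pi_\theta)$ and $\|\phi\| \le B$, and take the outer expectation over the randomness of $\{g_t\}$ — but since I will have engineered $\theta$ so that the per-state ratios are deterministic functions of $n, p, q, \{P_i\}$ (the $\Theta(1)$ Fisher factors don't depend on which random $\theta_t$ we are at, only on boundedness), the expectation is trivial. I expect the main obstacle to be the middle step, specifically getting the $\wt d^{\pi_s}$ (uniform-action grafted) visitation probabilities exactly right and matching them state-by-state against the $d^\star$ visitation in the window around the threshold: one has to be careful that the optimal $p$-threshold policy's state-visitation is \emph{not} simply ``survive then act optimally'' because once it accepts, the trajectory ends, so the relevant states for $d^\star$ at step $h$ are exactly the pre-acceptance states, and identifying which of those coincide with states the sampler under-covers requires a precise case analysis on $q \lessgtr p$ and on the position $i$ relative to $\floor{np}$. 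A secondary subtlety is choosing the parameterization so that it is simultaneously (i) rich enough that $\pi^\star$ and $\pi_s$ are both representable as the $p$- and $q$-threshold policies, and (ii) structured enough that $\Sigma_v^\theta$ is block-diagonal with the blocks aligned to the ``windows'' appearing in the final formula; I would handle this by using indicator features for the cells $(\text{phase}, X_i = 1)$ and verifying the Fisher block structure directly from $\nabla_\theta \ln \pi_\theta$.
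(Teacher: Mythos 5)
Your overall route is the same as the paper's: pick features that make both Fisher matrices simultaneously diagonal, so that the Rayleigh quotient $\sup_x \frac{x^\top \Sigma_{d^\star}^{\theta}x}{x^\top \Sigma_{d^t}^{\theta}x}$ collapses to a maximum of per\nobreakdash-coordinate ratios; compute the survival probabilities $\prod_{j=\floor{np}+1}^{i-1}(1-P_j)$, $\prod_{j=\floor{nq}+1}^{i-1}(1-P_j)$ and $2^{-(i-1)}$ in closed form; and absorb the action\nobreakdash-dependent factors into a $\Theta(1)$. However, your specific choice of parameterization does not establish the stated formulas. You propose indicator features for the four cells $(\text{phase }i\lessgtr \floor{np})\times(X_i=1 \text{ or not})$. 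With such features the diagonal Fisher entry for a cell is a \emph{sum} of visitation masses over all states in that cell, so the relative condition number becomes $\max_{c}\frac{\sum_{s\in c}d^\star(s)f(s)}{\sum_{s\in c}d^t(s)g(s)}$, which by the mediant inequality is at most $\max_s \frac{d^\star(s)f(s)}{d^t(s)g(s)}$ and can be exponentially smaller. Concretely, for $\kappa_{\textup{na\"ive}}$ with $P_j\approx 0$ for $j>\floor{np}$, the claimed value is $2^{\floor{np}}\max_i\prod 2(1-P_j)\approx 2^{n-1}$, whereas the aggregated cell $\{i>\floor{np}\}$ gives roughly $\frac{n-\floor{np}}{\sum_{i>\floor{np}}2^{-(i-1)}}\approx (n-\floor{np})\,2^{\floor{np}-1}$ — off by a factor exponential in $n-\floor{np}$. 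Since the theorem asserts the existence of a parameterization realizing these particular $\Theta(\cdot)$ expressions, you need the fully orthogonal per\nobreakdash-state (one\nobreakdash-hot) features, which is what the paper uses; that is the missing ingredient that turns the quotient into a max over \emph{individual} states $\frac{d^{\pi_p}(s)}{d^{\clubsuit}(s)}$.

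Two secondary points. First, the grafted distribution $\wt{d}^{\pi_s}$ has exactly the same state marginal as $d^{\pi_s}$ (the uniform action is only injected at the queried step), so it does not ``continue to put mass on states past the threshold''; its actual role is to make the denominator's per-state action factor $\tfrac12(1-\pi_\theta(s))^2+\tfrac12\pi_\theta(s)^2\ge \tfrac14$, preventing the denominator Fisher from degenerating when $\pi_\theta$ is near-deterministic. Relatedly, the na\"ive sampler's survival probability is exactly $2^{-(i-1)}$ with no $(1-P_j)$ modulation — those factors in the $\kappa_{\textup{na\"ive}}$ formula come from the numerator $d^{\pi_p}$. Second, your plan to evaluate the action factors ``at the optimum'' backfires: since $\pi_p(s)\in\{0,1\}$, the numerator factor $\pi_p(s)(1-\pi_\theta(s))^2+(1-\pi_p(s))\pi_\theta(s)^2$ tends to $0$ as $\pi_\theta\to\pi_p$, not to a constant. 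The correct bookkeeping is that this factor's ratio to the denominator's is at most $2$ for every $\theta$, and equals $1$ at $\theta_0=0^d$; combined with $\kappa=\max_t\E[\kappa(\theta_t)]\ge\kappa(\theta_0)$ this sandwiches $\kappa_\clubsuit$ between $\max_s\frac{d^{\pi_p}(s)}{d^\clubsuit(s)}$ and twice that quantity, which is where the $\Theta$ comes from.
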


To understand how curriculum learning influences $\kappa$, we apply \Cref{thm_kappa_sp} to three concrete cases.
They show that, when the state distribution induced by the optimal policy in the small problem is similar to that in the original large problem, then a single-step curriculum suffices (cf. \ding{175} of \Cref{rem:thm_main}).

\textbf{The classical case: an exponential improvement.}
We study the classical BCP first, where all the $n!$ permutations are sampled with equal probability.
The probability series for this case is $P_i = 1 / i$.
Substituting them into 
\Cref{eq_kappa} directly gives:
\begin{align*}
    \kappa_{\textup{curl}}
    = \left\{ \begin{array}{ll}
        \frac{\floor{n / \e}}{\floor{n q}}, & q \le \frac{1}{\e}, \\
        1, & q > \frac{1}{\e},
    \end{array} \right. \quad
    \kappa_{\textup{na\"ive}}
    = 2^{n - 1} \frac{\floor{n / \e}}{n - 1}.
\end{align*}
Except for the corner case where $q < 1 / n$, we have that $\kappa_{\textup{curl}} =  O(n)$ while $\kappa_{\textup{na\"ive}} = \Omega(2^n) $.
Notice that any distribution with $P_i \le 1 / i$ leads to an exponential improvement.

\textbf{A more general case.}
Now we try to loosen the condition where $P_i \le 1 / i$.
Let us consider the case where $P_i \le 1 / 2$ for $i \ge 2$ (by definition $P_1$ is always equal to $1$).
\Cref{eq_kappa} now becomes:
\begin{align*}
    \kappa_{\textup{curl}}
    \le \left\{ \begin{array}{ll}
        2^{\floor{n p} - \floor{n q}}, & q \le p, \\
        1, & q > p,
    \end{array} \right. \quad
    \kappa_{\textup{na\"ive}}
    \ge 2^{\floor{n p}}.
\end{align*}
Clearly, $\kappa_{\textup{curl}} \le \kappa_{\textup{na\"ive}}$ always holds.
\rebut{When $q$ is close to $p$,} the difference is \rebut{exponential in $\floor{n q}$}.

\textbf{Failure mode of Curriculum Learning.}
Lastly we show further relaxing the assumption on $P_i$ leads to failure cases.
The extreme case is that all $P_i = 1$, i.e., the maximum number always comes as the last one.
Suppose $q < 1 -  1 / n$, then $d^{\pi_q} (1) = 0$.
Hence $\kappa_{\textup{curl}} = \infty$, larger than $\kappa_{\textup{na\"ive}} = 2^{n - 1}$.
From 
\Cref{eq_kappa}, $\kappa_{\textup{na\"ive}} \le 2^{n - 1}$.
Similar as Section\,3 of \citet{paper_sp_theory}, the optimal threshold $p$ satisfies:
\begin{align*}
    \sum_{i = \floor{n p} + 2}^n \frac{P_i}{1 - P_i}
    \le 1
    < \sum_{i = \floor{n p} + 1}^n \frac{P_i}{1 - P_i}.
\end{align*}
So letting $P_n > 1 / 2$ results in $p \in [1 - 1 / n, 1)$.
Further, if $q < 1 - 1 / n$ and $P_j > 1 - 2^{- \frac{n}{n - \floor{n q} - 1}}$ for any $\floor{n q} + 1 \le j \le n - 1$, then from 
\Cref{eq_kappa}, $\kappa_{\textup{curl}} > 2^n > \kappa_{\textup{na\"ive}}$.
This means that Curriculum Learning can always be manipulated adversarially.
Sometimes there is hardly any reasonable curriculum.

\begin{remark}
    Here we only provide theoretical explanations for BCP when $samp =$ \texttt{pi\_s}, because $\kappa$ is highly problem-dependent, and the analytical forms for $\kappa$ is tractable when the sampler is fixed.
    For $samp =$ \texttt{pi\_t} and other CO problems such as OKD, however, we do not have analytical forms, so we resort to empirical studies (\Cref{sec_exp}).
\end{remark}

\section{Experiments}

\label{sec_exp}

\begin{figure*}[!t]
    \centering
    \includegraphics[width=\textwidth]{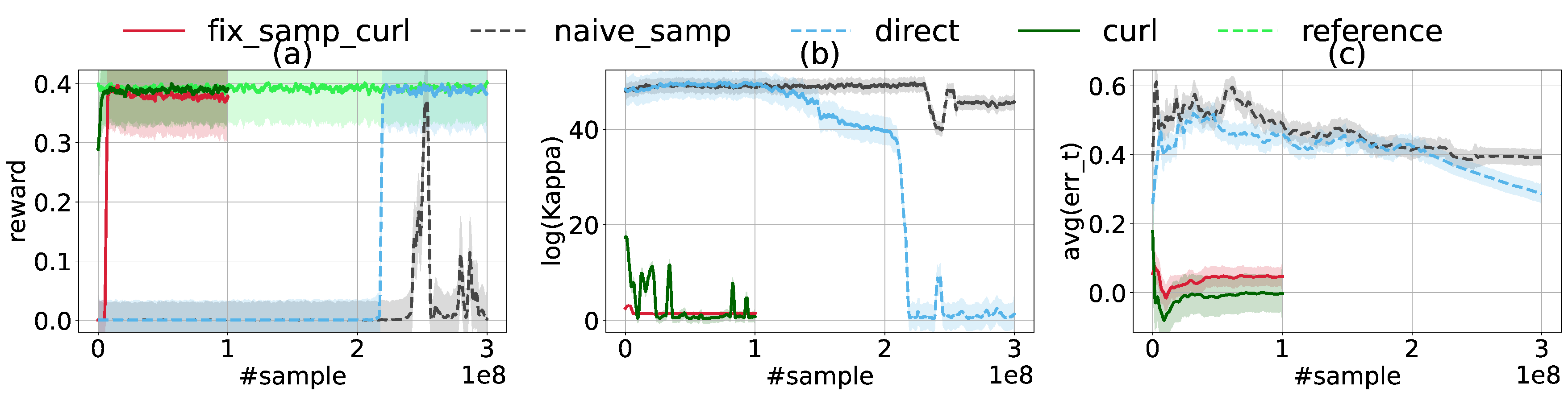}
    \caption{
    One experiment of BCP.
    \rebut{The $x$-axis is the number of trajectories, i.e., number of episodes $\times$ horizon $\times$ batch size.}
    \textbf{
    Dashed lines represent only final phase training and solid lines represent Curriculum Learning.}
    The shadowed area shows the $95 \%$ confidence interval for the expectation.
    The explanation for different modes can be found in \Cref{sec_learn_proc}.
    The reference policy is the optimal threshold policy.
    }
    \label{fig_SP}
\end{figure*}

\begin{figure*}[!t]
    \centering
    \includegraphics[width=\linewidth]{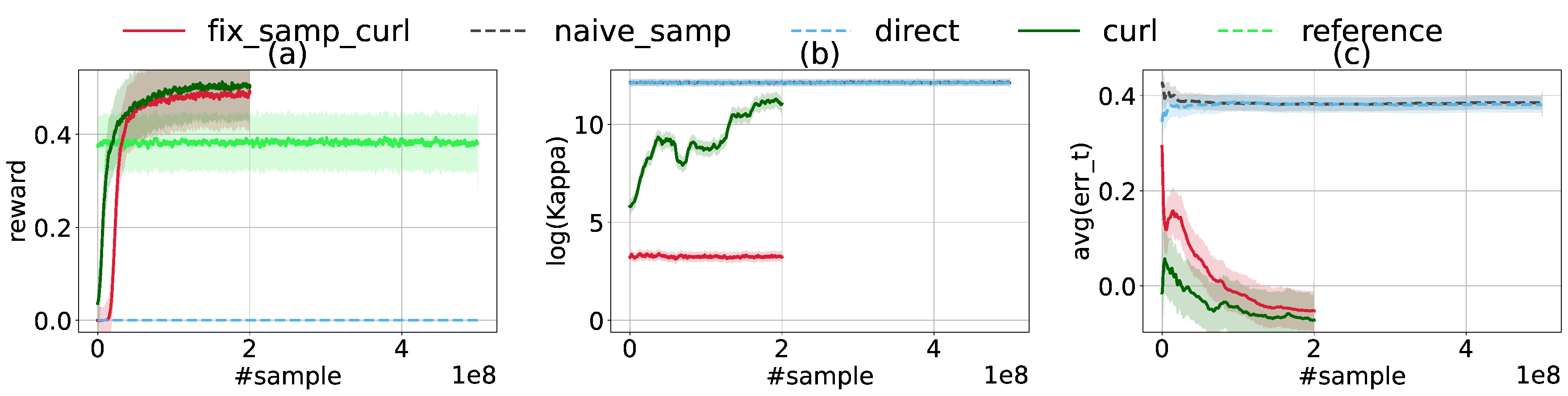}
    \caption{
    One experiment of OKD.
    Legend description is the same as that of \Cref{fig_SP}.
    The reference policy is the bang-per-buck algorithm for Online Knapsack (Section\,3.1 of \citet{paper_new_dog}).
    }
    \label{fig_OKD}
\end{figure*}

\begin{figure*}[!t]
    \centering
    \includegraphics[width=\linewidth]{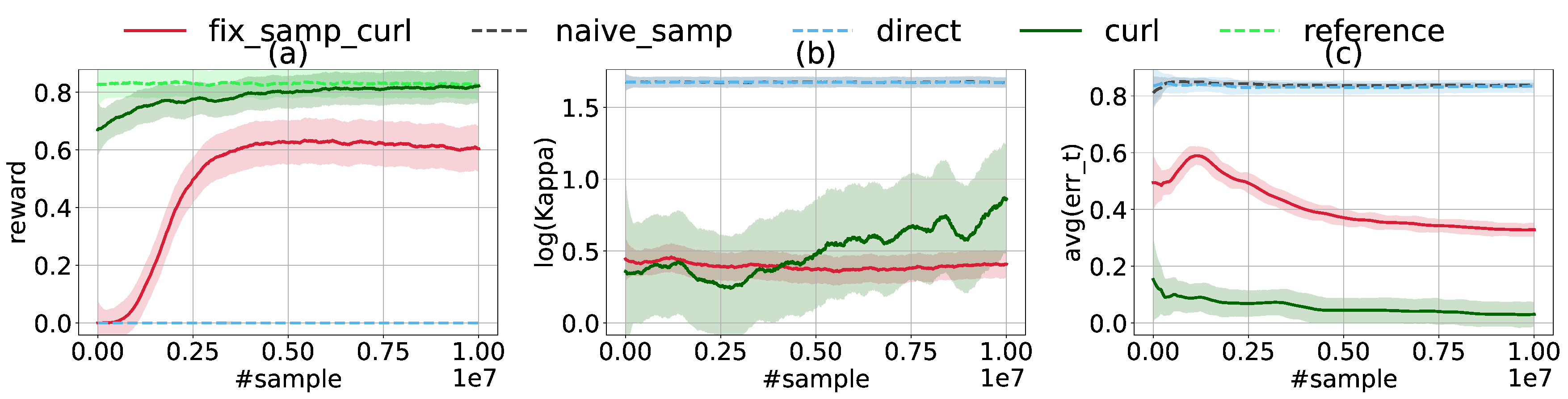}
    \caption{
    One experiment of ADW.
    Legend description is the same as that of \Cref{fig_SP}.
    The reference policy is obtained by running a \texttt{curl} procedure.
    The $\ln \kappa$ and avg$(\err_t)$ curves are then plotted with the above reference policy hard-coded into the environment.
    }
    \label{fig:ADW_19260817}
\end{figure*}

The experiments' formulations are modified from \citet{paper_new_dog}.
Due to page limit, more formulation details and results are presented in \Cref{sec_full_exp}, and code can be found at \url{https://github.com/zhourunlong/RL-for-Combinatorial-Optimization}.
In Curriculum Learning the entire training process splits into at most two phases.
We call the training on curriculum (small scale instances) ``warm-up phase'' and the training on large scale instances ``final phase''.
\newrebut{If the training is directly on large scale instances, we still call it ``final phase'' for convenience.
For each problem, we run multiple experiments using different distributions of instances.
Each experiment contains multiple training methods, e.g., direct training, curriculum learning, etc.}
To highlight the effect of curriculum learning, we omit the results regarding regularization, and they can be found in supplementary files.
All the trainings in the same experiment have \emph{the same distributions} over LMDPs for final phase and warm-up phase (if any), respectively.

\para{The Best Choice Problem (BCP).}
We show one of the four experiments in \Cref{fig_SP}.
Aside from reward and $\ln \kappa$, we plot the weighted average of $\err_t$ according to \Cref{thm_main}: avg$(\err_t) = \sum_{i = 0}^t \err_i / T$.
All the instance distributions are generated from parameterized series $\{ P_n \}$ with fixed random seeds, which guarantees reproducibility and comparability.
Aside from the fact that \emph{the curriculum is a smaller BCP}, there is \emph{no other explicit relationship between the curriculum and the target environment}, so the curriculum can be viewed as \emph{random and independent}.
The experiments clearly demonstrate that \emph{curriculum learning can boost the performance by a large margin} and curriculum learning indeed dramatically reduces $\kappa$, even the curriculum is randomly generated.

\para{Online Knapsack (decision version, OKD).}
We show one of the three experiments in \Cref{fig_OKD}.
$\ln \kappa$ and avg($\err_t$) are with respect to the reference policy, a bang-per-buck algorithm, which is not the optimal policy.
Thus, they are only for reference.
The curriculum generation is also parameterized, random and independent of the target environment.
The experiments again demonstrate the effectiveness of curriculum learning and curriculum learning indeed dramatically reduces $\kappa$.

\para{AdWords (decision version, ADW).}
We show one of the two experiments in \Cref{fig:ADW_19260817}.
The reference policy is obtained by using curriculum learning and training until nearly convergence.
The curriculum generation is also parameterized, random and independent of the target environment.
The experiments again demonstrate the effectiveness of curriculum learning.


\section{Conclusion}


We showed online CO problems could be naturally formulated as LMDPs, and we analyzed the convergence rate of NPG for LMDPs.
Our theory shows the main benefit of curriculum learning is finding a stronger sampling strategy, especially for classical BCP any curriculum exponentially improves the learning rate.
Our empirical results on BCP, OKD, and ADW also corroborated our findings.
Our work is the first attempt to systematically study techniques devoted to using RL to \rebut{tackle online} CO problems, which we believe is a fruitful direction worth further investigations.

\section*{Acknowledgement}
SSD acknowledges the support of NSF IIS 2110170, NSF
DMS 2134106, NSF CCF 2212261, NSF IIS 2143493,
NSF CCF 2019844, NSF IIS 2229881.

\bibliography{ref}
\bibliographystyle{tmlr/tmlr}

\appendix
\section{Full Results of the Main Algorithm and Theorem for Entropy Regularization} \label{sec:full_result}

\subsection{Notations and Definitions}

\paragraph{Entropy regularized value function, Q-function and advantage function.}
We incorporate \emph{entropy regularization} for completeness because prior works (especially empirical works) used it to facilitate training.
We define the value function in a unified way: $V_{m, h}^{\pi, \lambda} (s)$ is defined as the sum of future $\lambda$-regularized rewards starting from $s$ and executing $\pi$ for $h$ steps in $M_m$, i.e.,
\begin{align*}
    V_{m, h}^{\pi, \lambda} (s) := \E_{\cM_m, \pi} \left[ \left. \sum_{t = 0}^{h - 1} r_m^{\pi, \lambda} (s_t, a_t)\ \right|\ s_0 = s \right],
\end{align*}
where $r_m^{\pi, \lambda} (s, a) := r_m (s, a) + \lambda \ln \frac{1}{\pi (a | s)}$, and the expectation is with respect to the randomness of trajectory induced by $\pi$ in $M_m$.
Clearly, $V_{m, h}^\pi (s) = V_{m, h}^{\pi, 0} (s)$.

For any $\cM_m, \pi, h$, with $\cH (\pi (\cdot | s)) := \sum_{a \in \cA} \pi(a | s) \ln \frac{1}{\pi(a | s)} \in [0, \ln |\cA|]$ we define
\begin{align*}
    H_{m, h}^\pi (s) := \E_{\cM_m, \pi} \left[ \left. \sum_{t = 0}^{h - 1} \mathcal{H}(\pi (\cdot | s_t))\ \right|\ s_0 = s \right].
\end{align*}
In fact, $V_{m, h}^{\pi, \lambda} (s) = V_{m, h}^\pi (s) + \lambda H_{m, h}^\pi (s)$.

Denote $V^{\pi, \lambda} := \sum_{m=1}^M w_m \rebut{\sum_{s_0 \in \cS} \nu_m (s_0)} V_{m, H}^{\pi, \lambda} (s_0)$ then $V^\pi = V^{\pi, 0}$.
The original goal is to find $\pi^\star = \arg \max_{\pi \in \Pi} V^\pi$.
Under regularization, we seek for $\pi_\lambda^\star = \arg \max_{\pi \in \Pi} V^{\pi, \lambda}$ instead.
Denote $V^{\star, \lambda} = V^{\pi_\lambda^\star, \lambda}$.
Since $V^\star \le V^{\pi^\star, \lambda} \le V^{\star, \lambda} \le V^{\pi_\lambda^\star} + \lambda H \ln |\cA|$, the regularized optimal policy $\pi_\lambda^\star$ can be nearly optimal as long as the regularization coefficient $\lambda$ is small enough.
For notational ease, we abuse $\pi^\star$ with $\pi_\lambda^\star$. 

The Q-function can be defined in a similar manner:
\begin{align*}
    Q_{m, h}^{\pi, \lambda} (s, a) := \E_{\cM_m, \pi} \left[ \left. \sum_{t = 0}^{h - 1} r_m^{\pi, \lambda} (s_t, a_t) \ \right|\ (s_0, a_0) = (s, a) \right],
\end{align*}
and the advantage function is defined as $A_{m, h}^{\pi, \lambda} (s, a) := Q_{m, h}^{\pi, \lambda} (s, a) - V_{m, h}^{\pi, \lambda} (s)$.

Denote $\pi_t := \pi_{\theta_t}, V^{t, \lambda} := V^{\pi_t, \lambda}$ and $A_{m, h}^{t, \lambda} := A_{m, h}^{\pi_t, \lambda}$ for convenience.

\begin{definition}[\Cref{def_func_approx_loss} with entropy regularization] \label{def:func_approx_loss_full}
Let $g$ be the parameter update weight, then NPG is related to finding the minimizer for the following function:
\begin{align*}
    L(g; \theta, v) := \sum_{m=1}^M w_m \sum_{h=1}^H \E_{s, a \sim v_{m, H-h}} \left[ \left( A_{m, h}^{\pi_\theta, \lambda}(s, a) - g^\top \nabla_\theta \ln \pi_\theta (a | s) \right)^2 \right].
\end{align*}
\end{definition}

\begin{definition}[Lyapunov Potential Function \citep{paper_npg_logl_reg}] \label{def:lyapunov_potential}
We define the potential function $\Phi: \Pi \to \mathbb{R}$ as follows: for any $\pi \in \Pi$,
\begin{align*}
    \Phi(\pi)
    = \sum_{m = 1}^M w_m \sum_{h=0}^{H - 1} \E_{(s, a) \sim d_{m, h}^\star} \left[ \ln \frac{\pi^\star (a | s)}{\pi (a | s)} \right].
\end{align*}
\end{definition}

\subsection{Algorithms}

\Cref{alg_npg_full} is the full version of \Cref{alg_npg}, with support of entropy regularization.
\Cref{alg_samp} is the skipped sampling function.

\begin{algorithm}[t!]
\caption{\texttt{NPG}: Sample-based NPG (full version).}
\label{alg_npg_full}
\begin{algorithmic}[1]

\STATE {\bfseries Input:} Environment $E$; learning rate $\eta$; episode number $T$; batch size $N$; initialization $\theta_0$; sampler $\pi_s$; regularization coefficient $\lambda$; entropy clip bound $U$; optimization domain $\cG$.

\FOR{$t \gets 0, 1, \ldots, T - 1$}
    \STATE Initialize $\wh{F}_t \gets 0^{d \times d}, \wh{\nabla}_t \gets 0^d$.
    \FOR{$n \gets 0, 1, \ldots, N - 1$}
        \FOR{$h \gets 0, 1, \ldots, H - 1$}
            \IF{$\pi_s$ is not None}
                \STATE $s_h, a_h, \wh{A}_{H - h} (s_h, a_h) \gets$
                  \texttt{Sample} $(E, \pi_s, \textup{True}, \pi_t, h, \lambda, U)$ (see \Cref{alg_samp}). \\
                \algocomment{$s, a \sim \wt{d}_{m, h}^{\pi_s}$, estimate $A_{m, H - h}^{t, \lambda} (s, a)$.}
            \ELSE
                \STATE $s_h, a_h, \wh{A}_{H - h} (s_h, a_h) \gets$
                  \texttt{Sample} $(E, \pi_t, \textup{False}, \pi_t, h, \lambda, U)$. \\
                \algocomment{$s, a \sim d_{m, h}^{\theta_t}$, estimate $A_{m, H - h}^{t, \lambda} (s, a)$.}
            \ENDIF
        \ENDFOR
        \STATE Update:
        \begin{align*}
            \wh{F}_t &\gets \wh{F}_t + \sum_{h = 0}^{H - 1} \nabla_\theta \ln \pi_{\theta_t} (a_h | s_h) \left( \nabla_\theta \ln \pi_{\theta_t} (a_h | s_h) \right)^\top, \\
            \wh{\nabla}_t &\gets \wh{\nabla}_t + \sum_{h = 0}^{H - 1} \wh{A}_{H - h} (s_h, a_h) \nabla_\theta \ln \pi_{\theta_t} (a_h | s_h).
        \end{align*}
    \ENDFOR
    \STATE Call any solver to get $\wh{g}_t \gets \argmin_{g \in \cG} g^\top \wh{F}_t g - 2 g^\top \wh{\nabla}_t$.
    \STATE Update $\theta_{t + 1} \gets \theta_t + \eta \wh{g}_t$.
\ENDFOR
\STATE {\bfseries Return:} $\theta_T$.
\end{algorithmic}
\end{algorithm}

\begin{algorithm}[t!]
\caption{\texttt{Sample}: Sampler for $s \sim d_{m, h}^{\pi_{\textup{samp}}}$ where $m \sim $ Multinomial $(w_1, \ldots, w_M)$, $a \sim \Unif_{\cA}$ if $unif =$ True and $a \sim \pi_{\textup{samp}} (\cdot | s)$ otherwise, and estimate of $A_{m, H - h}^{t, \lambda} (s, a)$.}
\label{alg_samp}
\begin{algorithmic}[1]

\STATE {\bfseries Input:} Environment $E$; sampler policy $\pi_{\textup{samp}}$; whether to sample uniform actions after state $unif$; current policy $\pi_t$; time step $h$; regularization coefficient $\lambda$; entropy clip bound $U$.

\STATE $E$.reset(). 

\FOR{$i \gets 0, 1, \ldots, h - 1$}
    \STATE $s_i \gets E$.get\_state().
    \STATE Sample action $a_i \sim \pi_{\textup{samp}} (\cdot | s_i)$ and $E$.execute($a_i$).
\ENDFOR

\STATE $s_h \gets E$.get\_state().
\IF{$unif$ = True}
    \STATE $a_h \sim \Unif_{\cA}$.
\ELSE
    \STATE $a_h \sim \pi_{\textup{samp}} (\cdot | s_h)$.
\ENDIF

\STATE $(s, a) \gets (s_h, a_h)$.

\STATE Get a random number $p \sim$ $\Unif[0, 1]$.
\IF{$p < \frac{1}{2}$}
    \STATE Override $a_h \sim \pi_t (\cdot | s_h)$.
    \STATE Set importance weight $C \gets -2$.
    \STATE $r_h \gets E$.execute($a_h$).
    \STATE Initialize cumulative reward $R \gets r_h + \lambda \cH (\pi_t (\cdot | s_h))$.
\ELSE
    \STATE $C \gets 2$.
    \STATE $r_h \gets E$.execute($a_h$).
    \STATE $R \gets r_h + \lambda \min \{ \ln \frac{1}{\pi_t (a_h | s_h)}, U \}$.
\ENDIF

\FOR{$i \gets h + 1, h + 2, \ldots, H - 1$}
    \STATE $s_i \gets E$.get\_state().
    \STATE $a_i \sim \pi_t (\cdot | s_i)$ and $r_h \gets E$.execute($a_i$).
    \STATE $R \gets R + r_i + \lambda \cH (\pi_t (\cdot | s_i))$.
\ENDFOR

\STATE {\bfseries Return:} $s, a, \wh{A}_{H - h}^{t, \lambda} (s, a) = C R$.

\end{algorithmic}
\end{algorithm}

\subsection{Performance of Natural Policy Gradient for LMDP} \label{sec_proof_main_thm}

We restate \Cref{thm_main} with entropy regularization.

\begin{ntheorem} {\ref{thm_main}} [Full Statement of \Cref{thm_main}]
With \Cref{asp_main,def:err_t,def:lyapunov_potential}, \Cref{alg_npg_full} enjoys the following performance bound:
\begin{align*}
    \E \left[ \min_{0 \le t \le T} V^{\star, \lambda} - V^{t, \lambda} \right]
    &\le \frac{\lambda (1 - \eta \lambda)^{T + 1} \Phi (\pi_0)}{1 - (1 - \eta \lambda)^{T + 1}} + \eta \frac{B^2 G^2}{2} + \frac{\sum_{t = 0}^T (1 - \eta \lambda)^{T - t} \E[\err_t]}{\sum_{t' = 0}^T (1 - \eta \lambda)^{T - t'}} \\
    &\le \frac{\lambda (1 - \eta \lambda)^{T + 1} \Phi (\pi_0)}{1 - (1 - \eta \lambda)^{T + 1}} + \eta \frac{B^2 G^2}{2} + \sqrt{H \epsilon_{\textup{bias}}} + \sqrt{H \kappa \epsilon_{\textup{stat}}}.
\end{align*}

\end{ntheorem}

\begin{proof}
Here we make shorthands for the sub-optimality gap and potential function: $\Delta_t := V^{\star, \lambda} - V^{t, \lambda}$ and $\Phi_t := \Phi (\pi_t)$. From \Cref{lem_lyapunov_drift} we have
\begin{align*}
    \eta \Delta_t \le (1 - \eta \lambda) \Phi_t - \Phi_{t + 1} + \eta \err_t + \eta^2 \frac{B^2 G^2}{2}.
\end{align*}
Taking expectation over the update weights, we have
\begin{align*}
    \E [\eta \Delta_t] \le (1 - \eta \lambda) \E [\Phi_t] - \E [\Phi_{t + 1}] + \eta \E[\err_t] + \eta^2 \frac{B^2 G^2}{2}.
\end{align*}
Thus,
\begin{align*}
    \E \left[ \eta \sum_{t = 0}^T (1 - \eta \lambda)^{T - t} \Delta_t \right] 
    &\le \sum_{t = 0}^T (1 - \eta \lambda)^{T - t + 1} \E [\Phi_t] - \sum_{t = 0}^T (1 - \eta \lambda)^{T - t} \E[\Phi_{t + 1}] \\
    &\quad + \eta \sum_{t = 0}^T (1 - \eta \lambda)^{T - t} \E[\err_t] + \eta^2 \frac{B^2 G^2}{2} \sum_{t = 0}^T (1 - \eta \lambda)^{T - t} \\
    &= (1 - \eta \lambda)^{T + 1} \Phi_0 - \E [\Phi_{T + 1}]  + \eta \sum_{t = 0}^T (1 - \eta \lambda)^{T - t} \E[\err_t] + \eta^2 \frac{B^2 G^2}{2} \sum_{t = 0}^T (1 - \eta \lambda)^{T - t}\\
    &\le (1 - \eta \lambda)^{T + 1} \Phi_0  + \eta \sum_{t = 0}^T (1 - \eta \lambda)^{T - t} \E[\err_t] + \eta^2 \frac{B^2 G^2}{2} \sum_{t = 0}^T (1 - \eta \lambda)^{T - t},
\end{align*}
where the last step uses the fact that $\Phi(\pi) \ge 0$.
This is a weighted average, so by normalizing the coefficients,
\begin{align*}
    \E \left[ \min_{0 \le t \le T} \Delta_t \right]
    &\le \frac{\lambda (1 - \eta \lambda)^{T + 1} \Phi_0}{1 - (1 - \eta \lambda)^{T + 1}} + \eta \frac{B^2 G^2}{2} + \frac{\sum_{t = 0}^T (1 - \eta \lambda)^{T - t} \E[\err_t]}{\sum_{t' = 0}^T (1 - \eta \lambda)^{T - t'}} \\
    &\le \frac{\lambda (1 - \eta \lambda)^{T + 1} \Phi_0}{1 - (1 - \eta \lambda)^{T + 1}} + \eta \frac{B^2 G^2}{2} + \sqrt{H \epsilon_{\textup{bias}}} + \sqrt{H \kappa \epsilon_{\textup{stat}}},
\end{align*}
where the last step comes from \Cref{lem_approx_err} and Jensen's inequality.
This completes the proof.
\end{proof}

Aside from \Cref{rem:thm_main}, we have extra remarks:
\begin{remark} \label{rem:thm_main_full}
    \ding{172} This is the first result for LMDP and sample-based NPG with entropy regularization.
    \ding{173} For any fixed $\lambda > 0$ we have a linear convergence, which matches the result of discounted infinite horizon MDP (Theorem\,1 in \citet{paper_npg_logl_reg}); the limit when $\lambda$ tends to $0$ is $O( 1 / (\eta T) + \eta )$ (which implies an $O(1/\sqrt{T})$ rate), matching the result in \citet{paper_npg}.
\end{remark}

\section{Results of Curriculum Learning for the Best Choice Problem (BCP)} \label{sec_proof_kappa_sp}

\begin{ntheorem} {\ref{thm_kappa_sp}} [Formal statement of \Cref{thm_kappa_sp}]
For BCP, set $samp =$ \texttt{pi\_s} in \Cref{alg_curl}. Assume that each number is independent from others and the $i$-th number has probability $P_i$ of being the best so far (see formulation in \Cref{sec:lmdp} and \Cref{sec_full_exp_sp}). Assume the optimal policy is a $p$-threshold policy and the sampling policy is a $q$-threshold policy. There exists a policy parameterization and quantities
\begin{align*}
    k_{\textup{curl}}
    = \left\{ \begin{array}{ll}
        \prod_{j = \floor{n q} + 1}^{\floor{n p}} \frac{1}{1 - P_j}, & q \le p, \\
        1, & q > p,
    \end{array} \right. \quad
    k_{\textup{na\"ive}}
    = 2^{\floor{n p}} \max \left\{ 1, \max_{i \ge \floor{n p} + 2}\prod_{j = \floor{n p} + 1}^{i - 1} 2 (1 - P_j) \right\},
\end{align*}
such that $k_{\textup{curl}} \le \kappa_{\textup{curl}} \le 2 k_{\textup{curl}}$ and $k_{\textup{na\"ive}} \le \kappa_{\textup{na\"ive}} \le 2 k_{\textup{na\"ive}}$.
Here $\kappa_{\textup{curl}}$ and $\kappa_{\textup{na\"ive}}$ correspond to $\kappa$ induced by the $q$-threshold policy and the na\"ive random policy respectively.
\end{ntheorem}

\begin{proof}
We need to calculate three state-action visitation distributions: that induced by the optimal policy, $d^\star$; that induced by the sampler which is the optimal for the curriculum, $\wt{d}^{\textup{curl}}$; and that induced by the na\"ive random sampler, $\wt{d}^{\textup{na\"ive}}$. This then boils down to calculating the state(-action) visitation distribution under two types of policies: any threshold policy and the na\"ive random policy. 

For any policy $\pi$, denote $d^{\pi} (i / n)$ as the probability for the agent acting under $\pi$ to see states $i / n$ with arbitrary $x_i$.
We do not need to take the terminal state $g$ into consideration, since it stays in a zero-reward loop and contributes $0$ to $L(g; \theta, d)$.
We use the LMDP distribution parameterization $\{P_n\}$ described in \Cref{sec_exp}.

Denote $\pi_p$ as the $p$-threshold policy, i.e. accept if and only if $i / n > p$ and $x_i = 1$. Then
\begin{align*}
    d^{\pi_p} \left( \frac{i}{n} \right)
    &= \P (\text{reject all previous $i - 1$ states} | \pi_p) \\
    &= \prod_{j = 1}^{i - 1} \left( \P \left(\frac{j}{n}, 1 \right) \II\left[\frac{j}{n} \le p \right] + 1 - \P \left(\frac{j}{n}, 1 \right) \right) \\
    &= \prod_{j = \floor{n p} + 1}^{i - 1} \left( 1 - \P \left(\frac{j}{n}, 1 \right) \right) \\
    &= \prod_{j = \floor{n p} + 1}^{i - 1} ( 1 - P_j ).
\end{align*}
Denote $\pi_{\textup{na\"ive}}$ as the na\"ive random policy, i.e., accept any number with probability $1 / 2$ regardless of the state. Then
\begin{align*}
    d^{\pi_{\textup{na\"ive}}} \left( \frac{i}{n} \right)
    = \P (\text{reject all previous $i - 1$ states} | \pi_{\textup{na\"ive}})
    = \frac{1}{2^{i - 1}}.
\end{align*}
For any $\pi$, we can see that the state visitation distribution satisfies $d^\pi \left(i / n, 1 \right) = P_i d^{\pi} \left(i / n\right)$ and $d^\pi \left( i / n, 0 \right) = (1 - P_i) d^{\pi} \left( i / n \right)$.

To show the possible largest difference, we use a parameterization that for each state $s$, $\phi (s) =$ One-hot$(s)$.
The policy is then satisfied into
\begin{align*}
    \pi_\theta (\textup{accept} | s) = \frac{\exp (\theta^\top \phi(s))}{\exp (\theta^\top \phi(s)) + 1}, \quad
    \pi_\theta (\textup{reject} | s) = \frac{1}{\exp (\theta^\top \phi(s)) + 1},
\end{align*}
because there are only two actions.
Denote $\pi_\theta (s) = \pi_\theta (\textup{accept} | s)$, we have
\begin{align*}
    \nabla_\theta \ln \pi_\theta (\textup{accept} | s) = (1 - \pi_\theta (s)) \phi (s), \quad
    \nabla_\theta \ln \pi_\theta (\textup{reject} | s) = - \pi_\theta (s) \phi (s).
\end{align*}

Now suppose the optimal threshold and the threshold learned through curriculum are $p$ and $q$, then
\begin{align*}
    \Sigma_{d^\star}^\theta
    &= \sum_{s \in \cS} d^{\pi_p} (s) \left(\pi_p (s) (1 - \pi_\theta (s))^2 + (1 - \pi_p (s)) \pi_\theta (s)^2 \right) \phi(s) \phi(s)^\top, \\
    \Sigma_{\wt{d}^{\textup{curl}}}^\theta
    &= \sum_{s \in \cS} d^{\pi_q} (s) \left(\frac{1}{2} (1 - \pi_\theta (s))^2 + \frac{1}{2} \pi_\theta (s)^2 \right) \phi(s) \phi(s)^\top, \\
    \Sigma_{\wt{d}^{\textup{na\"ive}}}^\theta
    &= \sum_{s \in \cS} d^{\textup{na\"ive}} (s) \left(\frac{1}{2} (1 - \pi_\theta (s))^2 + \frac{1}{2} \pi_\theta (s)^2 \right) \phi(s) \phi(s)^\top.
\end{align*}
Denote $\kappa_\clubsuit (\theta) = \sup_{x \in \mathbb{R}^d} \frac{x^\top \Sigma_{d^\star}^{\theta} x}{x^\top \Sigma_{\wt{d}^{\clubsuit}}^\theta x}$. From parameterization we know all $\phi(s)$ are orthogonal. Abusing $\pi_q$ with $\pi_{\textup{curl}}$, we have
\begin{align*}
    \kappa_\clubsuit (\theta) = \max_{s \in \cS} \frac{d^{\pi_p} (s) \left(\pi^\star (s) (1 - \pi_\theta (s))^2 + (1 - \pi^\star (s)) \pi_\theta (s)^2 \right)}{d^\clubsuit (s) \left(\frac{1}{2} (1 - \pi_\theta (s))^2 + \frac{1}{2} \pi_\theta (s)^2 \right)}.
\end{align*}

We can separately consider each $s \in \cS$ because of the orthogonal features. Observe that $\pi_p (s) \in \{0, 1\}$, so for $s \in \cS$, its corresponding term in $\kappa_\clubsuit (\theta)$ is maximized when $\pi_\theta (s) = 1 - \pi_p (s)$ and is equal to $2 \frac{d^{\pi_p} (s)}{d^\clubsuit (s)}$. By definition, $\kappa_\clubsuit = \max_{0 \le t \le T} \E [\kappa_\clubsuit (\theta_t)]$. Since $\theta_0 = 0^d$, we have $\kappa_\clubsuit \ge \kappa_\clubsuit (0^d)$ where $\pi_\theta (s) = 1 / 2$ and the corresponding term is $\frac{d^{\pi_p} (s)}{d^\clubsuit (s)}$. So
\begin{align*}
    \max_{s \in \cS} \frac{d^{\pi_p} (s)}{d^\clubsuit (s)} \le \kappa_\clubsuit \le 2 \max_{s \in \cS} \frac{d^{\pi_p} (s)}{d^\clubsuit (s)}.
\end{align*}
We now have an order-accurate result $k_\clubsuit = \max_{s \in \cS} \frac{d^{\pi_p} (s)}{d^\clubsuit (s)}$ for $\kappa_\clubsuit$. Direct computation gives
\begin{align*}
    k_{\textup{curl}}
    &= \left\{ \begin{array}{ll}
        \prod_{j = \floor{n q} + 1}^{\floor{n p}} \frac{1}{1 - P_j}, & q \le p, \\
        1, & q > p,
    \end{array} \right. \\
    k_{\textup{na\"ive}}
    &= 2^{\floor{n p}} \max \left\{ 1, \max_{i \ge \floor{n p} + 2}\prod_{j = \floor{n p} + 1}^{i - 1} 2 (1 - P_j) \right\}.
\end{align*}
This completes the proof.
\end{proof}

\section{Full Experiments} \label{sec_full_exp}

Here are all the experiments not shown in \Cref{sec_exp}.
All the experiments were run on a server with CPU AMD Ryzen 9 3950X, GPU NVIDIA GeForce 2080 Super and 128G memory.
For legend description please refer to the caption of \Cref{fig_SP}.
For code please refer to \url{https://github.com/zhourunlong/RL-for-Combinatorial-Optimization}.

\paragraph{Policy parameterization.}
\begin{itemize}
    \item For BCP and OKD, there are exactly two actions, so we can use $\phi(s) = \phi(s, \textup{accept}) - \phi(s, \textup{reject})$ instead of $\phi(s, \textup{accept})$ and $\phi(s, \textup{reject})$.
    Now the policy is $ \pi_\theta (\textup{accept} | s) = \frac{\exp (\theta^\top \phi(s))}{\exp (\theta^\top \phi(s)) + 1}$ and $\pi_\theta (\textup{reject} | s) = \frac{1}{\exp (\theta^\top \phi(s)) + 1}.$

    \item For ADW, there are $n + 1$ actions ($n$ for assigning a slot to advertisers and $1$ for not assigning it).
    So, we must follow the canonical form of log-linear policies.
\end{itemize}

\paragraph{Training schemes.}
We ran nine experiments in total, four for BCP, three for OKD, and two for ADW.
The difference between the experiments of the same problem lies in the distribution over instances (i.e., $\{w_m\}$). In the following subsections, we will introduce how we parameterized the distribution in detail. 
In a single experiment, we ran eight setups, each representing a combination of sampler policies, initialization policies of the final phase, and whether we used regularization.
For visual clarity, we did not plot setups with entropy regularization, but the readers can plot it using \texttt{plot.py} in the supplementary files.
We make a detailed list of the training schemes in \Cref{tab_training_scheme}.

\begin{table}[t!]
    \centering
    \begin{tabular}{|p{3.2cm}|p{6.5cm}|p{3cm}|}
        \toprule[2pt]
        \textbf{Abbreviation} & \textbf{Detailed setup} & \textbf{Script} \\
        \midrule[1pt]
        \texttt{fix\_samp\_curl} & \textbf{Fix}ed \textbf{samp}ler \textbf{cur}riculum \textbf{l}earning. In the warm-up phase, train a policy $\pi_s$ from scratch (with zero initialization in parameters) using a small environment $E'$. In the final phase, change to the true environment $E$, use $\pi_s$ as the sampler policy to train a policy from scratch. & Run Alg.\,\ref{alg_curl} with $samp = \texttt{pi\_s}$ and $\lambda = 0$. \\
        \midrule[0.5pt]
        \texttt{fix\_samp\_curl\_reg} & The same as \texttt{fix\_samp\_curl}, but add entropy \textbf{reg}ularization to both phases. & Run Alg.\,\ref{alg_curl} with $samp = \texttt{pi\_s}$ and $\lambda \ne 0$. \\
        \midrule[0.5pt]
        \texttt{direct} & \textbf{Direct} learning. Only the final phase. Train a policy from scratch directly in $E$. & Run Alg.\,\ref{alg_npg} with $\theta_0 = 0^d$, $\pi_s =$ None and $\lambda = 0$. \\
        \midrule[0.5pt]
        \texttt{direct\_reg} & The same as \texttt{direct}, but add entropy \textbf{reg}ularization. & Run Alg.\,\ref{alg_npg} with $\theta_0 = 0^d$, $\pi_s =$ None and $\lambda \ne 0$. \\
        \midrule[0.5pt]
        \texttt{naive\_samp} & Learning with the \textbf{na\"ive} \textbf{samp}ler. Only the final phase. Use the na\"ive random policy as the sampler to train a policy from scratch in $E$. & Run Alg.\,\ref{alg_npg} with $\theta_0 = 0^d$, $\pi_s =$ na\"ive random policy and $\lambda = 0$. \\
        \midrule[0.5pt]
        \texttt{naive\_samp\_reg} & The same as \texttt{naive\_samp}, but add entropy \textbf{reg}ularization. & Run Alg.\,\ref{alg_npg} with $\theta_0 = 0^d$, $\pi_s =$ naive random policy and $\lambda \ne 0$. \\
        \midrule[0.5pt]
        \texttt{curl} & \textbf{Cur}riculum \textbf{l}earning. In the warm-up phase, train a policy $\pi_s$ from scratch in $E'$. In the final phase, change to $E$ and continue on training $\pi_s$. & Run Alg.\,\ref{alg_curl} with $samp = \texttt{pi\_t}$ and $\lambda = 0$. \\
        \midrule[0.5pt]
        \texttt{curl\_reg} & The same as \texttt{curl}, but add entropy \textbf{reg}ularization. & Run Alg.\,\ref{alg_curl} with $samp = \texttt{pi\_t}$ and $\lambda \ne 0$. \\
        \midrule[0.5pt]
        \texttt{reference} & This is the \textbf{reference} policy. For BCP, it is exactly the optimal policy since it can be calculated. For OKD, it is a bang-per-buck policy and is not the optimal policy (whose exact form is not clear). For ADW, it is the near optimal policy in our restricted policy / feature class (trained using curriculum learning). & N/A \\
        \bottomrule[2pt]
    \end{tabular}
    \caption{Detailed setups for each training scheme.}
    \label{tab_training_scheme}
\end{table}

\subsection{The Best Choice Problem (BCP)}
\label{sec_full_exp_sp}

\paragraph{State and action spaces.}
States with $X_i > 1$ are the same.
To make the problem ``scale-invariant'', we use $i / n$ to represent $i$.
So the states are $s = (i / n, x_i = \II[X_i = 1])$.
There is an additional terminal state $g = (0, 0)$.
For each state, the agent can either accept or reject.

\paragraph{Transition and reward.}
Any action in $g$ leads back to $g$.
Once the agent accepts the $i$-th number, the state transits into $g$, and reward is $1$ if $i$ is the maximum in the instance. 
If the agent rejects, then the state goes to $((i + 1) / n, x_{i + 1})$ if $i < n$ and $g$ if $i = n$.
For all other cases, rewards are $0$.

\paragraph{Feature mapping.}
Recall that all states are of the form $(f, x)$ where $f \in [0, 1],\ x \in \{0, 1\}$.
We set a degree $d_0$ and the feature mapping is constructed as the collection of polynomial bases with degree less than $d_0$ ($d = 2 d_0$):
\begin{align*}
    \phi(f, x) = (1, f, \ldots, f^{d_0 - 1}, x, f x, \ldots, f^{d_0 - 1} x).
\end{align*}

\paragraph{LMDP distribution.}
We model the distribution as follows: for each $i$, we can have $x_i = 1$ with probability $P_i$ and is independent from other $i'$.
By definition, $P_1 = 1$ while other $P_i$ can be arbitrary.
The classical BCP satisfies $P_i = 1 / i$.
We also experimented on three other distributions (so in total there are four experiments), each with a series of numbers $p_2, p_3, \ldots, p_n \iid \Unif_{[0, 1]}$ and set $P_i = 1 / i^{2 p_i + 0.25}$.

For each experiment, we run eight setups, each with different combinations of sampler policies, initialization policies of the final phase, and the value of regularization coefficient $\lambda$.
For the warm-up phases we set $n = 10$ and for final phases $n = 100$. 

\paragraph{Results.}
\Cref{fig_SP_uniform_trunc} (with its full view \Cref{fig_SP_uniform}), \Cref{fig_SP_20000308}, \Cref{fig_SP_19283746}, along with \Cref{fig_SP} (with seed 2018011309) show four experiments of BCP.
They shared a learning rate of $0.2$, batch size of $100$ per step in horizon, final $n = 100$ and warm-up $n = 10$ (if applied curriculum learning).
\footnote{All the four trainings shown in the figures have their counterparts with regularization ($\lambda = 0.01$). Check the supplementary files and use TensorBoard for visualization.}

The experiment in \Cref{fig_SP_uniform_trunc} was done in the classical BCP environment, i.e., all permutations have probability $1 / n!$ to be sampled.
Experiments \Cref{fig_SP}, \Cref{fig_SP_20000308} and \Cref{fig_SP_19283746} were done with other distributions: the only differences are the random seeds, which we fixed and used to generate $P_i$s for reproducibility.

The experiment of classical BCP was run until the direct training of $n = 100$ converges, while all other experiments were run to a maximum episode of $30000$ (hence sample number of $T H b = 30000 \times 100 \times 100 = 3 \times 10^8$).

The optimal policy was derived from dynamic programming.

\begin{figure}[H]
    \centering
    \includegraphics[width=\linewidth]{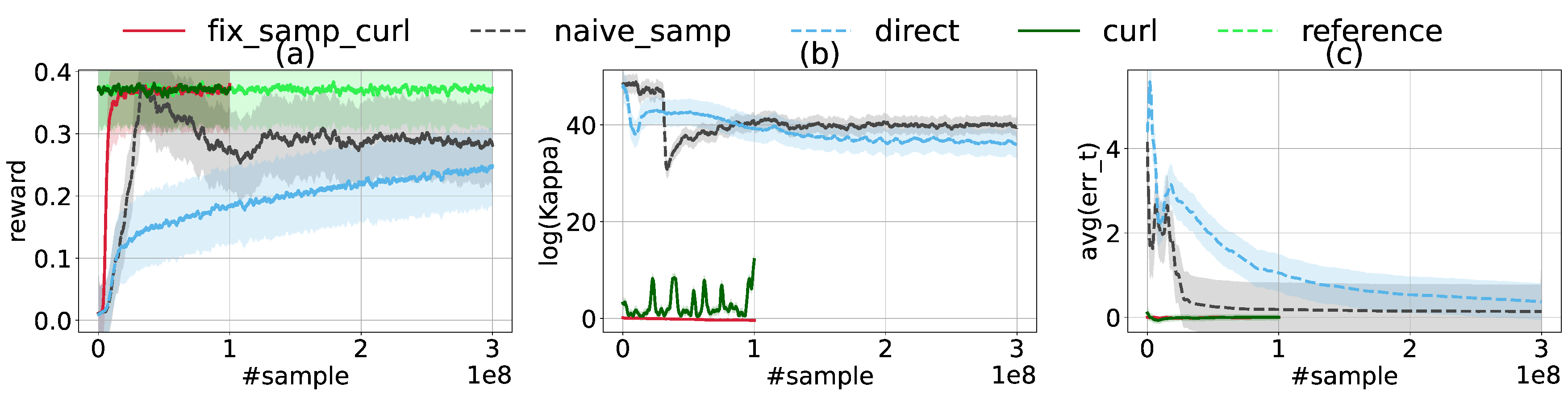}
    \caption{Classical BCP, truncated to $3 \times 10^8$ samples.}
    \label{fig_SP_uniform_trunc}
\end{figure}

\begin{figure}[H]
    \centering
    \includegraphics[width=\linewidth]{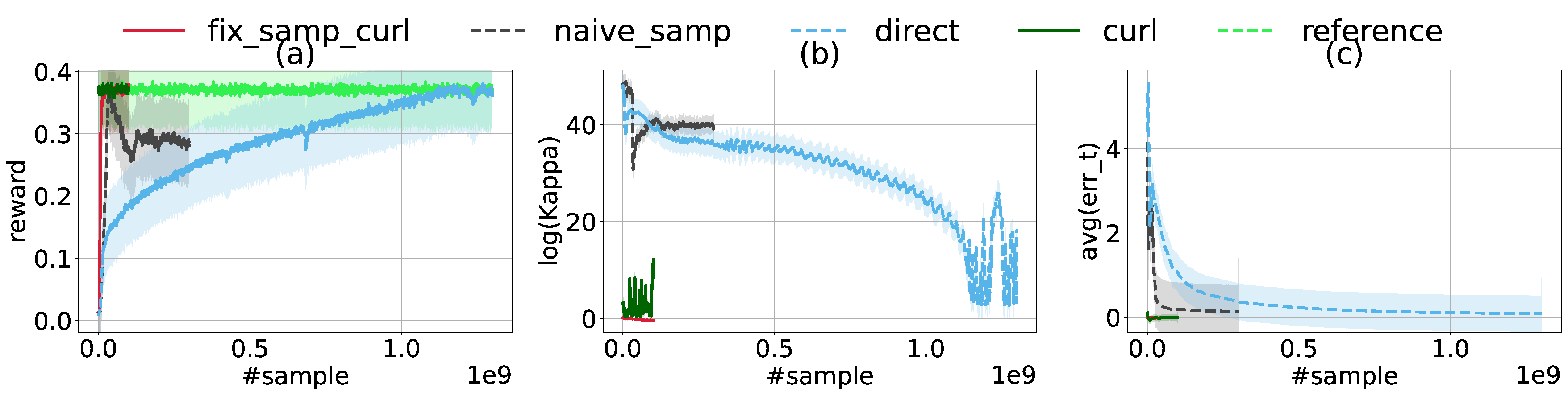}
    \caption{Classical BCP, full view.}
    \label{fig_SP_uniform}
\end{figure}

\begin{figure}[H]
    \centering
    \includegraphics[width=\linewidth]{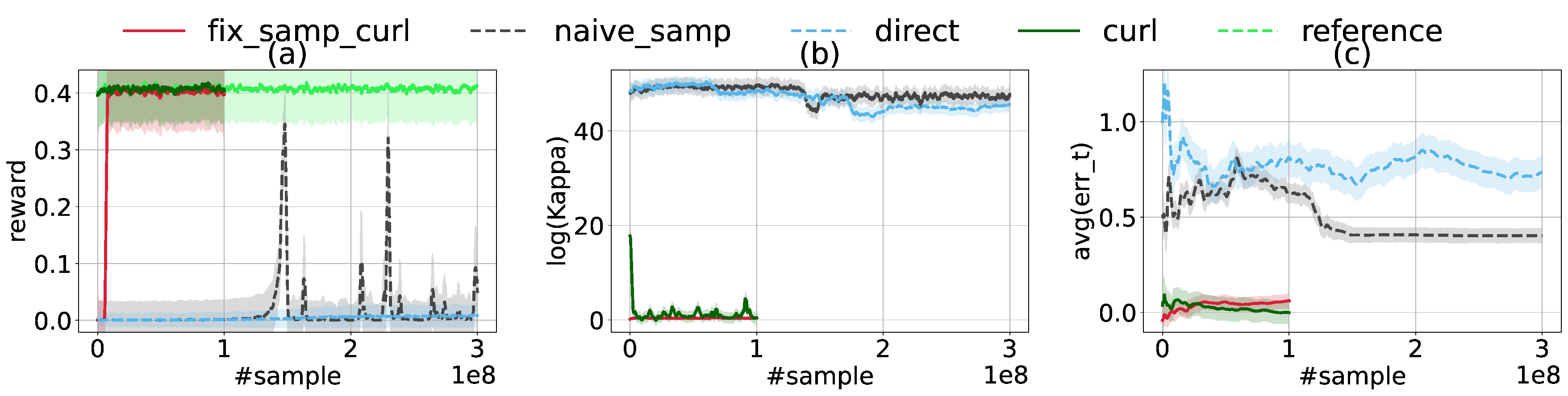}
    \caption{BCP, with seed 20000308.}
    \label{fig_SP_20000308}
\end{figure}

\begin{figure}[H]
    \centering
    \includegraphics[width=\linewidth]{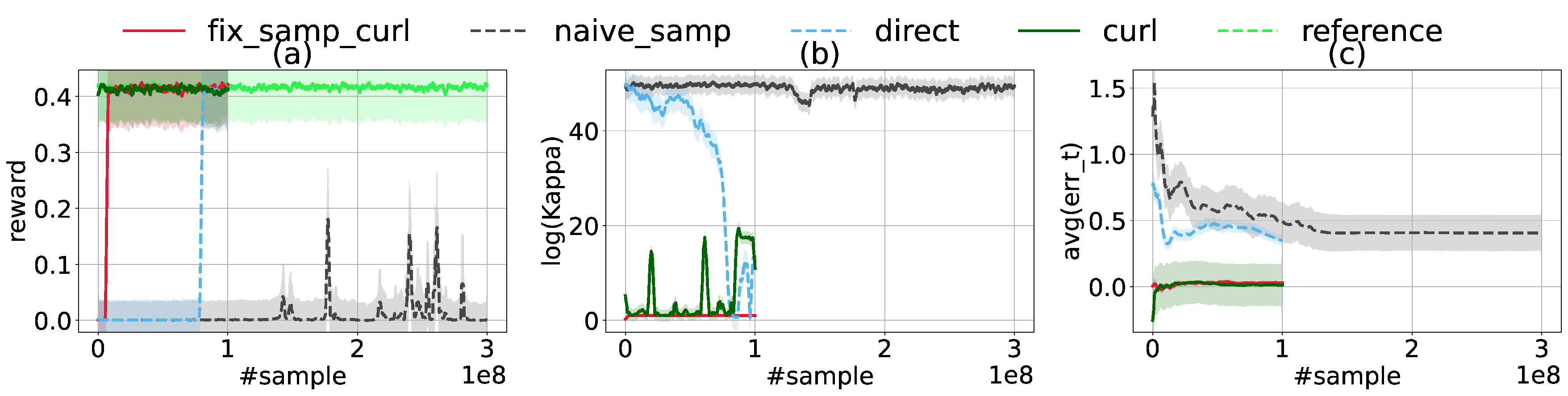}
    \caption{BCP, with seed 19283746.}
    \label{fig_SP_19283746}
\end{figure}

\subsection{Online Knapsack (decision version, OKD)} \label{sec_full_exp_okd}

\paragraph{State and action spaces.}
The states are represented as
\begin{align*}
    s = \left( \frac{i}{n}, s_i, v_i, \frac{\sum_{j = 1}^{i - 1} x_j s_j}{B}, \frac{\sum_{j = 1}^{i - 1} x_j v_j}{V} \right),
\end{align*}
where $x_j = \II[\text{item $j$ was successfully chosen}]$ for $1 \le j \le i - 1$ (in the instance). There is an additional terminal state $g = (0, 0, 0, 0, 0)$. For each state (including $g$ for simplicity), the agent can either accept or reject.

\paragraph{Transition and reward.}
The transition is implied by the definition of the problem. Any action in terminal state $g$ leads back to $g$. The item is successfully chosen if and only if the agent accepts and the budget is sufficient. A reward of $1$ is given only the first time $\sum_{j = 1}^i x_i v_i \ge V$, and then the state goes to $g$. For all other cases, reward is $0$.

\paragraph{Feature mapping.}
Suppose the state is $(f, s, v, r, q)$. We set a degree $d_0$ and the feature mapping is constructed as the collection of polynomial bases with degree less than $d_0$ ($d = d_0^5$): $\phi(f, s, v, r, q) = (f^{i_f} s^{i_s} v^{i_v} r^{i_r} q^{i_q})_{i_f, i_s, i_v, i_r, i_q}$ where $i_\clubsuit \in \{0, 1, \ldots, d_0 - 1\}$.

\paragraph{LMDP distribution.}
In \Cref{sec_setting_okd} the values and sizes are sampled from $F_v$ and $F_s$.
If $F_v$ or $F_s$ is not $\Unif_{[0, 1]}$, we model the distribution as: first set a granularity $gran$ and take $gran$ numbers $p_1, p_2, \ldots, p_{gran} \iid \Unif_{[0, 1]}$.
$p_i$ represents the (unnormalized) probability that $x \in ((i - 1) / gran, i / gran)$.
To sample, we take $i \sim $ Multinomial$(p_1, p_2, \ldots, p_{gran})$ and return $x \sim (i - 1 + \Unif_{[0, 1]}) / gran$.

For each experiment, we ran four setups, each with different combinations of sampler policies and initialization policies of the final phase.  For the warm-up phases $n = 10$ and for final phases we set $n = 100$ in all experiments, while $B$ and $V$ vary. In one experiment it satisfies that $B / n$ are close for warm-up and final, and $V / B$ increases from warm-up to final.

\paragraph{Results.}
\Cref{fig_OKD_2018011309}, \Cref{fig_OKD_20000308}, along with \Cref{fig_OKD} (with $F_v = F_s = \Unif_{[0, 1]}$) show three experiments of OKD.
They shared a learning rate of $0.1$, batch size of $100$ per step in horizon, final $n = 100$ and warm-up $n = 10$ (if applied curriculum learning). 

Experiments in \Cref{fig_OKD_2018011309} and \Cref{fig_OKD_20000308} were done with other value and size distributions: the only differences are the random seeds, which we fixed and used to generate $F_v$ and $F_s$ for reproducibility.

All experiments were run to a maximum episode of $50000$ (hence sample number of $T H b = 50000 \times 100 \times 100 = 5 \times 10^8$).

The reference policy is a bang-per-buck algorithm (Section\,3.1 of \citet{paper_new_dog}): given a threshold $r$, accept $i$-th item if $v_i / s_i \ge r$. We searched for the optimal $r$ with respect to Online Knapsack because we found that in general the reward is unimodal to $r$ and contains no ``plain area'', so we can easily apply ternary search (the reward of OKD contains ``plain area'').

\begin{figure}[H]
    \centering
    \includegraphics[width=\linewidth]{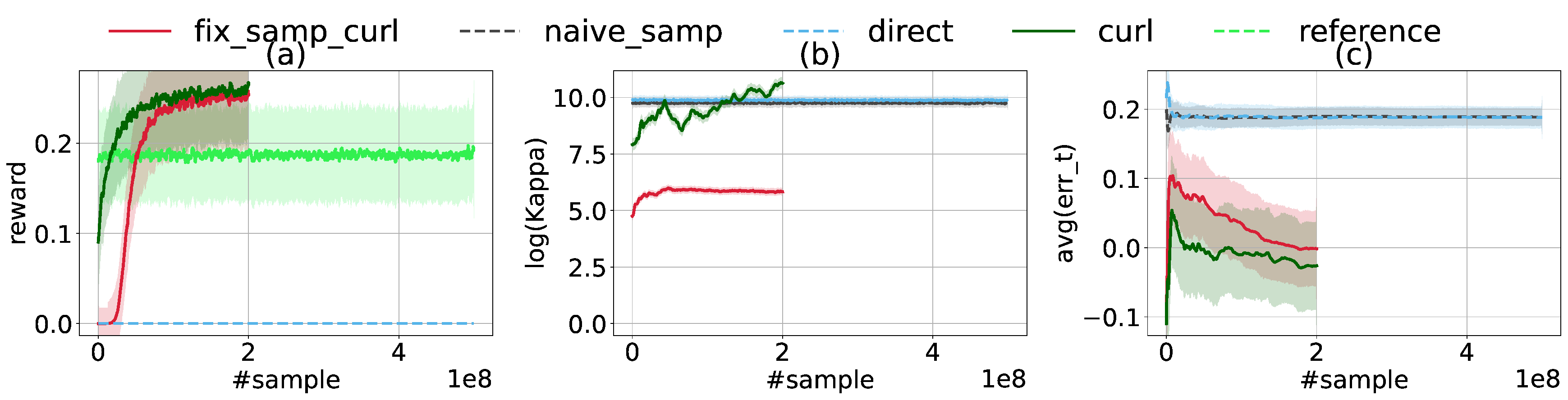}
    \caption{OKD, with seed 2018011309.}
    \label{fig_OKD_2018011309}
\end{figure}

\begin{figure}[H]
    \centering
    \includegraphics[width=\linewidth]{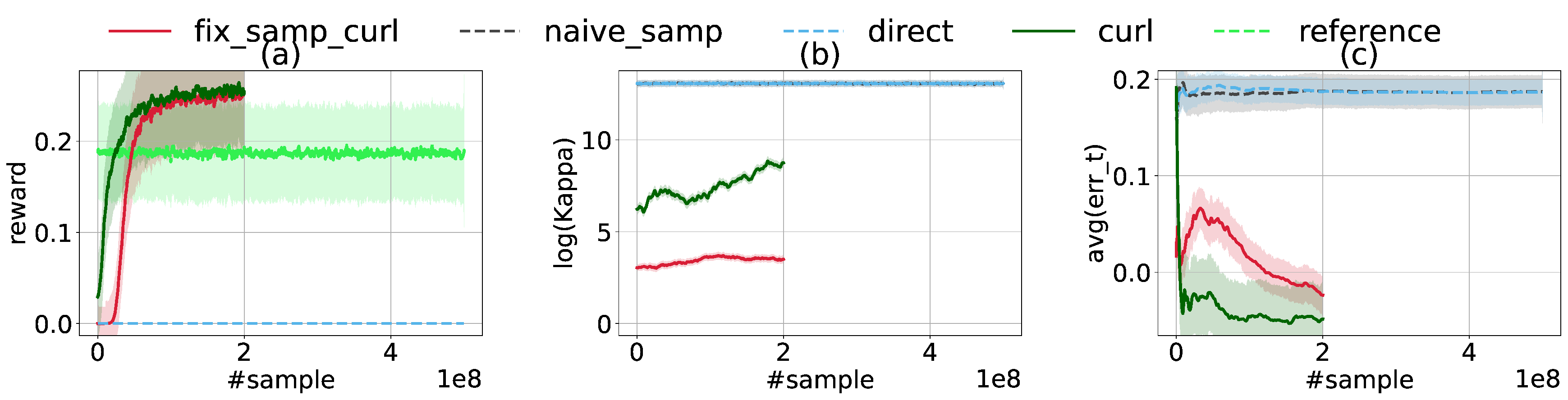}
    \caption{OKD, with seed 20000308.}
    \label{fig_OKD_20000308}
\end{figure}

\subsection{AdWords (decision version, ADW)} \label{sec_full_exp_adw}

\paragraph{State and action spaces.}
The states are represented as
\begin{align*}
    s = \left( \frac{j}{m}, v_{1, j}, v_{2, j}, \ldots, v_{n, j}, B_1, B_2, \ldots, B_n, \frac{V_j}{V} \right),
\end{align*}
where $V_j$ is equal to the total revenue up until now.
There is an additional terminal state $g = 0^{2 n + 2}$.
For each state (including $g$ for simplicity), the agent has $n + 1$ actions, with $0$ representing not assigning the slot and $1, \ldots, n$ representing assigning to the corresponding advertiser.

\paragraph{Transition and reward.}
The transition is implied by the definition of the problem. Any action in terminal state $g$ leads back to $g$. The slot $j$ is successfully assigned to advertiser $i$ if and only if the action is $i$ and $B_i \ge v_{i, j}$.
The next state is then with $B_i \gets B_i - v_{i, j}$ and $V_j \gets V_j + v_{i, j}$.
A reward of $1$ is given only the first time $V_j + v_{i, j} \ge V$, and then the state goes to $g$. For all other cases, reward is $0$.

\paragraph{Feature mapping.}
The feature design in ADW is a bit tricky, since the state dimension is super large.
We simplify the setting by assuming all the advertisers are symmetric, so we design a function $\phi$ and for action $1 \le i \le n$,
\begin{align*}
    \phi_{s, i} = \phi \left(\frac{j}{m}, v_{i, j}, B_i, \frac{V_j}{V}\right),
\end{align*}
and
\begin{align*}
    \phi_{s, 0} = \phi \left(\frac{j}{m}, 0, 0, \frac{V_j}{V}\right).
\end{align*}
Actually, not assigning the slot is equal to assigning the slot to a virtual advertiser with value $0$.

We set a degree $d_0$ and $\phi (f, v, B, q)$ is constructed as the collection of polynomial bases with degree less than $d_0$ ($d = d_0^4$): $\phi(f, v, B, q) = (f^{i_f} v^{i_v} B^{i_B} q^{i_q})_{i_f, i_v, i_B, i_q}$ where $i_\clubsuit \in \{0, 1, \ldots, d_0 - 1\}$.

\paragraph{LMDP distribution.}
In \Cref{sec:setting_adw} the values $v_{i, j}$ are sampled from $F_i$.
If $F_i$ is not $\Unif_{[0, 1]}$, we model the distribution in the same manner as in OKD.
For each experiment, we ran four setups, each with different combinations of sampler policies and initialization policies of the final phase.

In the experiment depicted in \Cref{fig:ADW_19260817}:
For the warm-up phases we set $(n, m) = (3, 6)$ and $V = 2.7$.
For final phases we set $(n, m) = (10, 20)$ and $V = 9$.
The distributions are parameterized random ones with $gran = 10$.

In the experiment depicted in \Cref{fig:ADW_19260817_special}:
For the warm-up phases we set $(n, m) = (3, 6)$ and $V = 2.64$.
For final phases we set $(n, m) = (8, 32)$ and $V = 7.04$.
The distributions are specially designed distributions, with probability $p$ it has a $0.4$ value, and the rest $1 - p$ mass is random on $(0.6, 1)$.
This distribution type has a special near optimal policy class: either pick two $0.4$, or pick anything in $(0.8, 1)$.

\paragraph{Results.}
\Cref{fig:ADW_19260817} and \Cref{fig:ADW_19260817_special} are experiments of ADW.
They shared a learning rate of $0.1$ and batch size of $100$ per step in horizon.

The reference policy is obtained by first running a curriculum learning, then using the learned policy as the reference policy.
This is because after we simplify the feature representation, we need to compare with the near optimal policy inside this restricted policy / feature class.

\begin{figure}[H]
    \centering
    \includegraphics[width=\linewidth]{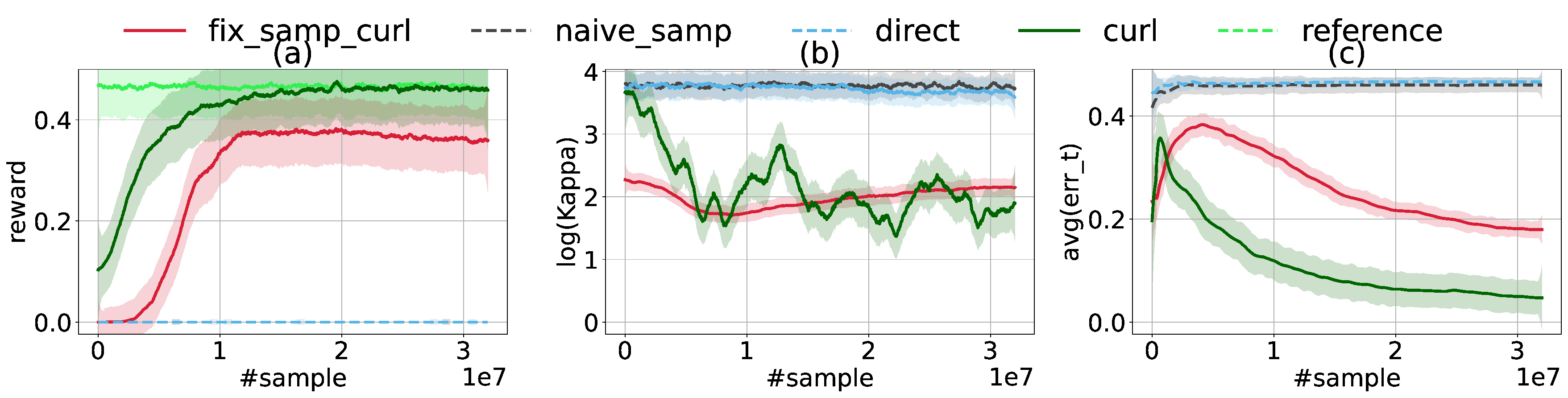}
    \caption{ADW, with seed 19260817 and special distributions.}
    \label{fig:ADW_19260817_special}
\end{figure}

\section{Technical Details and Lemmas}

\subsection{Natural Policy Gradient for LMDP}

\label{sec_policy_gradient}

This section is a complement to \Cref{sec_learn_proc}.
We give details about the correctness of Natural Policy Gradient for LMDP.

\Cref{thm_policy_gradient} is the finite-horizon Policy Gradient Theorem for LMDP, which takes the mixing weight $\{w_m\}$ into consideration.

According to \citet{paper_npg}, the unconstrained, full-information NPG update weight satisfies $F(\theta_t) g_t = \nabla_\theta V^{t, \lambda}$.
\Cref{lem_sol_pinv} and \Cref{lem_npg_upd_rule} together show that: it is equivalent to finding a minimizer of the fitting compatible function approximation loss (\Cref{def:func_approx_loss_full}).

\begin{theorem} [Policy Gradient Theorem for LMDP] \label{thm_policy_gradient}
For any policy $\pi_\theta$ parameterized by $\theta$, and any $1 \le m \le M$,
\begin{align*}
    \nabla_\theta \left( \E_{s_0 \sim \nu_m} \left[ V_{m, H}^{\pi_\theta, \lambda} (s_0) \right] \right)
    = \sum_{h = 1}^H \E_{s, a \sim d_{m, H - h}^{\theta}} \left[ Q_{m, h}^{\pi_\theta, \lambda} (s, a) \nabla_\theta \ln \pi_\theta (a | s) \right].
\end{align*}
As a result,
\begin{align*}
    \nabla_\theta V^{\pi_\theta, \lambda}
    = \sum_{m = 1}^M w_m \sum_{h = 1}^H \E_{s, a \sim d_{m, H - h}^{\theta}} \left[ Q_{m, h}^{\pi_\theta, \lambda} (s, a) \nabla_\theta \ln \pi_\theta (a | s) \right].
\end{align*}
\end{theorem}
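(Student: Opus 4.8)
The plan is to prove the per-MDP identity first and then obtain the LMDP statement by taking the $\{w_m\}$-weighted average, since the left-hand side $\nabla_\theta V^{\pi_\theta,\lambda} = \sum_{m=1}^M w_m \nabla_\theta \big(\E_{s_0\sim\nu_m}[V_{m,H}^{\pi_\theta,\lambda}(s_0)]\big)$ by linearity of the gradient and the definition $V^{\pi,\lambda}=\sum_m w_m\sum_{s_0}\nu_m(s_0)V_{m,H}^{\pi,\lambda}(s_0)$. So it suffices to fix an arbitrary $m$ and prove the first display. I would do this by induction on the horizon $H$ (more precisely, on the step index $h$ appearing in $V_{m,h}^{\pi_\theta,\lambda}$), exploiting the Bellman recursion for the entropy-regularized value and Q-functions.

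First I would record the Bellman relations: for $h\ge 1$, $Q_{m,h}^{\pi_\theta,\lambda}(s,a) = r_m^{\pi_\theta,\lambda}(s,a) + \E_{s'\sim P_m(\cdot|s,a)}[V_{m,h-1}^{\pi_\theta,\lambda}(s')]$, where $r_m^{\pi_\theta,\lambda}(s,a) = r_m(s,a) + \lambda\ln\frac{1}{\pi_\theta(a|s)}$, and $V_{m,h}^{\pi_\theta,\lambda}(s) = \sum_a \pi_\theta(a|s) Q_{m,h}^{\pi_\theta,\lambda}(s,a)$, with $V_{m,0}^{\pi_\theta,\lambda}\equiv 0$. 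Then $\nabla_\theta V_{m,h}^{\pi_\theta,\lambda}(s) = \sum_a \big[\nabla_\theta\pi_\theta(a|s)\big]Q_{m,h}^{\pi_\theta,\lambda}(s,a) + \sum_a \pi_\theta(a|s)\nabla_\theta Q_{m,h}^{\pi_\theta,\lambda}(s,a)$. The first sum I rewrite using the log-derivative trick, $\nabla_\theta\pi_\theta(a|s) = \pi_\theta(a|s)\nabla_\theta\ln\pi_\theta(a|s)$, giving $\E_{a\sim\pi_\theta(\cdot|s)}[Q_{m,h}^{\pi_\theta,\lambda}(s,a)\nabla_\theta\ln\pi_\theta(a|s)]$. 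For the second sum, $\nabla_\theta Q_{m,h}^{\pi_\theta,\lambda}(s,a)$ has two pieces: the gradient of the regularizer term $-\lambda\nabla_\theta\ln\pi_\theta(a|s)$, and $\E_{s'\sim P_m(\cdot|s,a)}[\nabla_\theta V_{m,h-1}^{\pi_\theta,\lambda}(s')]$ (the transition $P_m$ and base reward $r_m$ do not depend on $\theta$). The $-\lambda\nabla_\theta\ln\pi_\theta$ piece, after taking $\E_{a\sim\pi_\theta}$, contributes $-\lambda\sum_a\pi_\theta(a|s)\nabla_\theta\ln\pi_\theta(a|s) = -\lambda\nabla_\theta\big(\sum_a\pi_\theta(a|s)\big) \cdot(\text{something}) = 0$, since $\sum_a\pi_\theta(a|s)\nabla_\theta\ln\pi_\theta(a|s) = \sum_a\nabla_\theta\pi_\theta(a|s) = \nabla_\theta 1 = 0$. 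Hence the regularization does not alter the classical form of the theorem — this is the one place worth being careful, and I would state it as a small lemma.

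Unrolling the recursion then gives $\nabla_\theta V_{m,H}^{\pi_\theta,\lambda}(s_0) = \sum_{h=1}^{H}\E\big[Q_{m,h}^{\pi_\theta,\lambda}(s_{H-h},a_{H-h})\nabla_\theta\ln\pi_\theta(a_{H-h}|s_{H-h})\,\big|\,\cM_m,\pi_\theta,s_0\big]$, where the expectation over $(s_{H-h},a_{H-h})$ is exactly the state-action visitation distribution $d_{m,H-h}^{\theta}$ after averaging over $s_0\sim\nu_m$; matching the index conventions between "steps remaining" ($h$) and "steps elapsed" ($H-h$) in Def.~\ref{def_func_approx_loss}'s visitation distributions is the main bookkeeping hazard. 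Finally, multiplying by $w_m$ and summing over $m$ yields the second display. The main obstacle is purely notational — keeping the finite-horizon, time-inhomogeneous indexing straight and verifying that the entropy-regularization term cancels under $\E_{a\sim\pi_\theta}$ — rather than anything conceptually deep; everything reduces to the standard finite-horizon policy gradient argument applied MDP-by-MDP and then averaged with weights $\{w_m\}$.
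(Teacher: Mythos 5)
Your proposal is correct and follows essentially the same route as the paper's proof: both rest on the Bellman recursion for the regularized $V$ and $Q$, the log-derivative trick, the observation that the $-\lambda\nabla_\theta\ln\pi_\theta$ term vanishes under $\E_{a\sim\pi_\theta}$ since $\sum_a\nabla_\theta\pi_\theta(a|s)=0$, and then a telescoping of the recursion against the visitation distributions $d_{m,H-h}^\theta$ before averaging over $m$ with weights $w_m$. The only cosmetic difference is that you unroll the recursion inductively whereas the paper sums the per-step identity over all $h$ weighted by $d_{m,H-h}^\theta$ and rearranges; these are the same telescoping identity.
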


\begin{proof}
For any $1 \le h \le H$ and $s \in \cS$, since $V_{m, h}^{\pi_\theta, \lambda} (s) = \sum_{a \in \cA} \pi_\theta (a | s) Q_{m, h}^{\pi_\theta, \lambda} (s, a)$, we have
\begin{align*}
    \nabla_\theta V_{m, h}^{\pi_\theta, \lambda} (s) = \sum_{a \in \cA} \left( Q_{m, h}^{\pi_\theta, \lambda} (s, a) \nabla_\theta \pi_\theta (a | s) + \pi_\theta (a | s) \nabla_\theta Q_{m, h}^{\pi_\theta, \lambda} (s, a) \right).
\end{align*}
Hence 
\begin{align*}
    \sum_{h = 1}^H \sum_{s \in \cS} d_{m, H - h}^{\theta} (s)  \nabla_\theta V_{m, h}^{\pi_\theta, \lambda} (s)
    &= \sum_{h = 1}^H \sum_{s \in \cS} d_{m, H - h}^{\theta} (s) \sum_{a \in \cA} \left( Q_{m, h}^{\pi_\theta, \lambda} (s, a) \nabla_\theta \pi_\theta (a | s) + \pi_\theta (a | s) \nabla_\theta Q_{m, h}^{\pi_\theta, \lambda} (s, a) \right) \\
    &= \sum_{h = 1}^H \sum_{s \in \cS} d_{m, H - h}^{\theta} (s) \sum_{a \in \cA} \pi_\theta (a | s) Q_{m, h}^{\pi_\theta, \lambda} (s, a) \nabla_\theta \ln \pi_\theta (a | s) \\
    &\quad + \sum_{h = 1}^H \sum_{s \in \cS} d_{m, H - h}^{\theta} (s) \sum_{a \in \cA} \pi_\theta (a | s) \nabla_\theta Q_{m, h}^{\pi_\theta, \lambda} (s, a) \\
    &= \sum_{h = 1}^H \E_{s, a \sim d_{m, H - h}^{\theta}} \left[ Q_{m, h}^{\pi_\theta, \lambda} (s, a) \nabla_\theta \ln \pi_\theta (a | s) \right] \\
    &\quad + \sum_{h = 1}^H \sum_{s \in \cS} d_{m, H - h}^{\theta} (s) \sum_{a \in \cA} \pi_\theta (a | s) \nabla_\theta Q_{m, h}^{\pi_\theta, \lambda} (s, a).
\end{align*}
Next we focus on the second term. From the Bellman equation, 
\begin{align*}
    \nabla_\theta Q_{m, h}^{\pi_\theta, \lambda} (s, a)
    &= \nabla_\theta \left( r_\theta (s, a) - \lambda \ln \pi_\theta (a | s) + \sum_{s' \in \cS} P(s' | s, a) V_{m, h - 1}^{\pi_\theta, \lambda} (s') \right) \\
    &= - \lambda \nabla_\theta \ln \pi_\theta (a | s) + \sum_{s' \in \cS} P(s' | s, a) \nabla_\theta V_{m, h - 1}^{\pi_\theta, \lambda} (s').
\end{align*}
Particularly, $\nabla_\theta Q_{i, 1}^{\pi, \lambda} (s, a) = - \lambda \nabla_\theta \ln \pi_\theta (a | s)$. So
\begin{align*}
    & \sum_{h = 1}^H \sum_{s \in \cS} d_{m, H - h}^{\theta} (s) \sum_{a \in \cA} \pi_\theta (a | s) \nabla_\theta Q_{m, h}^{\pi_\theta, \lambda} (s, a) \\
    &= \sum_{h = 1}^H \sum_{s \in \cS} d_{m, H - h}^{\theta} (s) \sum_{a \in \cA} \pi_\theta (a | s) \left( - \lambda \nabla_\theta \ln \pi_\theta (a | s) + \sum_{s' \in \cS} P(s' | s, a) \nabla_\theta V_{m, h - 1}^{\pi_\theta, \lambda} (s') \right) \\
    &= -\lambda \sum_{h = 1}^H \sum_{s \in \cS} d_{m, H - h}^{\theta} (s) \underbrace{\sum_{a \in \cA} \nabla_\theta \pi_\theta (a | s)}_{= \boldsymbol{0}} + \sum_{h = 2}^H \sum_{s' \in \cS} \nabla_\theta V_{m, h - 1}^{\pi_\theta, \lambda} (s') \underbrace{\sum_{s \in \cS} d_{m, H - h}^{\theta} (s) \sum_{a \in \cA} \pi_\theta (a | s)  P(s' | s, a)}_{= d_{m, H - h + 1}^{\theta} (s')} \\
    &= \sum_{h = 2}^H \sum_{s' \in \cS} d_{m, H - h + 1}^{\theta} (s') \nabla_\theta V_{m, h - 1}^{\pi_\theta, \lambda} (s')  \\
    &= \sum_{h = 1}^H \sum_{s \in \cS} d_{m, H - h}^{\theta} (s)  \nabla_\theta V_{m, h}^{\pi_\theta, \lambda} (s) - \sum_{s_0 \in \cS} \nu_m (s_0) \nabla_\theta V_{m, H}^{\pi_\theta, \lambda} (s_0),
\end{align*}
where we used the definition of $d$ and $\nu_m$. So by rearranging the terms, we complete the proof.
\end{proof}

\begin{lemma} \label{lem_sol_pinv}
Suppose $\Gamma \in \mathbb{R}^{n \times m}, D = \mathrm{diag}(d_1, d_2, \ldots, d_m) \in \mathbb{R}^{m \times m}$ where $d_i \ge 0$ and $q \in \mathbb{R}^{m}$, then $x = (\Gamma D \Gamma^\top)^\dagger \Gamma D q$ is a solution to the equation $\Gamma D \Gamma^\top x = \Gamma D q$.
\end{lemma}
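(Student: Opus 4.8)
The plan is to recognize this as a standard fact about the Moore--Penrose pseudoinverse: for a \emph{consistent} linear system $Ax = b$ with $A$ symmetric positive semidefinite, $x = A^\dagger b$ is an exact solution, and the only real content is to verify consistency, i.e. that $\Gamma D q$ lies in the column space of $\Gamma D \Gamma^\top$.

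First I would set $A := \Gamma D \Gamma^\top$ and $b := \Gamma D q$, and note that $A$ is symmetric and positive semidefinite, since $D \succeq 0$ (its diagonal entries $d_i$ are nonnegative). Introducing $D^{1/2} := \diag(\sqrt{d_1}, \dots, \sqrt{d_m})$ and $B := \Gamma D^{1/2} \in \mathbb{R}^{n \times m}$, I would rewrite $A = B B^\top$ and $b = B(D^{1/2} q)$. The point of this factorization is that $b$ is manifestly an element of $\mathrm{col}(B)$.

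Next I would invoke the elementary identity $\mathrm{col}(B B^\top) = \mathrm{col}(B)$: if $B B^\top x = 0$ then $\| B^\top x \|_2^2 = x^\top B B^\top x = 0$, so $B^\top x = 0$, and the converse inclusion is immediate; hence $\mathrm{Null}(B B^\top) = \mathrm{Null}(B^\top)$, and taking orthogonal complements (legitimate because $B B^\top$ is symmetric, so its column space is the orthogonal complement of its null space) gives $\mathrm{col}(A) = \mathrm{col}(B B^\top) = \mathrm{Null}(B^\top)^\perp = \mathrm{col}(B)$. Combined with the previous step this yields $b \in \mathrm{col}(A)$, which is exactly the consistency of $Ax = b$.

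Finally I would use the defining property that $A A^\dagger$ is the orthogonal projector onto $\mathrm{col}(A)$; since $b \in \mathrm{col}(A)$ this gives $A A^\dagger b = b$, i.e. $A (A^\dagger b) = b$, which is precisely $\Gamma D \Gamma^\top x = \Gamma D q$ for $x = A^\dagger b = (\Gamma D \Gamma^\top)^\dagger \Gamma D q$. There is essentially no obstacle here, as every step is routine linear algebra; the one point worth care is the consistency check, because without $b \in \mathrm{col}(A)$ the vector $A^\dagger b$ would only be a least-squares solution rather than an exact one, and the $D^{1/2}$ factorization is exactly what makes this transparent.
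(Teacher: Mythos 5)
Your proof is correct, and it shares the paper's key first move — factoring $D = D^{1/2}D^{1/2}$ so that $\Gamma D\Gamma^\top = BB^\top$ and $\Gamma Dq = B(D^{1/2}q)$ with $B = \Gamma D^{1/2}$ — but it finishes differently. The paper takes the SVD $B = U\Sigma V^\top$, writes out $(BB^\top)^\dagger = U(\Sigma\Sigma^\top)^\dagger U^\top$ explicitly, and verifies the identity $(\Sigma\Sigma^\top)(\Sigma\Sigma^\top)^\dagger\Sigma = \Sigma$ by direct computation on the diagonal matrices, so that $BB^\top\bigl[(BB^\top)^\dagger Bp\bigr] = U\Sigma V^\top p = Bp$. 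You instead reduce everything to a consistency check: $\mathrm{col}(BB^\top) = \mathrm{col}(B)$ (via $\mathrm{Null}(BB^\top) = \mathrm{Null}(B^\top)$ and orthogonal complements), hence $b = B(D^{1/2}q) \in \mathrm{col}(A)$, and then the projector property $AA^\dagger b = b$ for $b \in \mathrm{col}(A)$ closes the argument. Your route is shorter and makes the conceptual point explicit — the only thing that could fail is consistency of the system, and the $D^{1/2}$ factorization is precisely what guarantees it — at the cost of citing the standard fact that $AA^\dagger$ is the orthogonal projector onto $\mathrm{col}(A)$; the paper's SVD computation is more pedestrian but entirely self-contained. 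Both are complete and correct.
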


\begin{proof}
Denote $D^{1/2} = \diag(\sqrt{d_1}, \sqrt{d_2}, \ldots, \sqrt{d_m}), P = \Gamma D^{1/2}, p = D^{1/2} q$, then the equation is reduced to $P P^\top x = P p$. Suppose the singular value decomposition of $P$ is $U \Sigma V^\top$ where $U \in \mathbb{R}^{n \times n}, \Sigma \in \mathbb{R}^{n \times m}, V \in \mathbb{R}^{m \times m}$ where $U$ and $V$ are unitary, and singular values are $\sigma_1, \sigma_2, \ldots, \sigma_k$. So $P P^\top = U (\Sigma \Sigma^\top) U^\top$ and $(P P^\top)^\dagger = U (\Sigma \Sigma^\top)^\dagger U^\top$. Notice that
\begin{align*}
    \Sigma \Sigma^\top = \diag(\sigma_1^2, \sigma_2^2, \ldots, \sigma_k^2, 0, \ldots, 0) \in \mathbb{R}^{n \times n},
\end{align*}
we can then derive the pseudo-inverse of this particular diagonal matrix as 
\begin{align*}
    (\Sigma \Sigma^\top)^\dagger = \diag(\sigma_1^{-2}, \sigma_2^{-2}, \ldots, \sigma_k^{-2}, 0, \ldots, 0).
\end{align*}
It is then easy to verify that $(\Sigma \Sigma^\top) (\Sigma \Sigma^\top)^\dagger \Sigma = \Sigma$. Finally,
\begin{align*}
    P P^\top x &= (P P^\top) [(P P^\top)^\dagger P p] \\
    &= U (\Sigma \Sigma^\top) U^\top U (\Sigma \Sigma^\top)^\dagger U^\top U \Sigma V^\top p \\
    &= U (\Sigma \Sigma^\top) (\Sigma \Sigma^\top)^\dagger \Sigma V^\top p \\
    &= U \Sigma V^\top p \\
    &= P p.
\end{align*}
This completes the proof.
\end{proof}

\begin{lemma}[NPG Update Rule] \label{lem_npg_upd_rule}
The update rule $\theta \gets \theta + \eta F(\theta)^\dagger \nabla_\theta V^{\pi_\theta, \lambda}$ where
\begin{align*}
    F(\theta) = \sum_{m=1}^M w_m \sum_{h=1}^H \E_{s, a \sim d_{m, H-h}^{\theta}} \left[ \left( \nabla_\theta \ln \pi_\theta (a | s) \right)^\otimes \right]
\end{align*}
is equivalent to $\theta \gets \theta + \eta g^\star$, where $g^\star$ is a minimizer of the function 
\begin{align*}
    L(g) = \sum_{m=1}^M w_m \sum_{h=1}^H \E_{s, a \sim d_{m, H-h}^{\theta}} \left[ \left( A_{m, h}^{\pi_\theta, \lambda}(s, a) - g^\top \nabla_\theta \ln \pi_\theta (a | s) \right)^2 \right].
\end{align*}
\end{lemma}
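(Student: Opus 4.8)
The plan is to show that both update rules amount to adding $\eta$ times a solution of the single linear system $F(\theta)\,g = \nabla_\theta V^{\pi_\theta,\lambda}$ to $\theta$: the right-hand side will come from expanding $L$ and invoking the Policy Gradient Theorem, and the fact that $F(\theta)^\dagger\nabla_\theta V^{\pi_\theta,\lambda}$ is such a solution will come from Lem.~\ref{lem_sol_pinv}.

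First I would expand $L$ as a quadratic in $g$. Writing $b := \sum_{m=1}^M w_m\sum_{h=1}^H \E_{s,a\sim d_{m,H-h}^{\theta}}\!\left[A_{m,h}^{\pi_\theta,\lambda}(s,a)\,\nabla_\theta\ln\pi_\theta(a|s)\right]$, a direct expansion gives $L(g) = g^\top F(\theta)\,g - 2\,g^\top b + c$, where $c = \sum_m w_m\sum_h \E[(A_{m,h}^{\pi_\theta,\lambda})^2]$ does not depend on $g$. Since $F(\theta)$ is a sum of rank-one positive semidefinite matrices, $L$ is a convex quadratic, so its set of global minimizers coincides with the solution set of the stationarity condition $\nabla_g L(g) = 2F(\theta)g - 2b = 0$, i.e. of $F(\theta)\,g = b$.

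Next I would identify $b$ with the policy gradient. By Thm.~\ref{thm_policy_gradient}, $\nabla_\theta V^{\pi_\theta,\lambda} = \sum_m w_m\sum_h \E_{s,a\sim d_{m,H-h}^{\theta}}[Q_{m,h}^{\pi_\theta,\lambda}(s,a)\nabla_\theta\ln\pi_\theta(a|s)]$. Substituting $A_{m,h}^{\pi_\theta,\lambda}(s,a) = Q_{m,h}^{\pi_\theta,\lambda}(s,a) - V_{m,h}^{\pi_\theta,\lambda}(s)$ into $b$, the difference $b - \nabla_\theta V^{\pi_\theta,\lambda}$ equals $-\sum_m w_m\sum_h \E_{s\sim d_{m,H-h}^{\theta}}\!\big[V_{m,h}^{\pi_\theta,\lambda}(s)\sum_{a\in\cA}\pi_\theta(a|s)\nabla_\theta\ln\pi_\theta(a|s)\big]$, and the innermost sum vanishes because $\sum_{a\in\cA}\pi_\theta(a|s)\nabla_\theta\ln\pi_\theta(a|s) = \sum_{a\in\cA}\nabla_\theta\pi_\theta(a|s) = \nabla_\theta 1 = 0$. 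Hence $b = \nabla_\theta V^{\pi_\theta,\lambda}$, so the minimizers of $L$ are exactly the solutions of $F(\theta)\,g = \nabla_\theta V^{\pi_\theta,\lambda}$. Finally I would apply Lem.~\ref{lem_sol_pinv}: indexing the finitely many tuples $(m,h,s,a)$, let $\Gamma$ be the $d\times(\cdot)$ matrix with column $\nabla_\theta\ln\pi_\theta(a|s)$ at $(m,h,s,a)$, let $D$ be the diagonal matrix with nonnegative entry $w_m\,d_{m,H-h}^{\theta}(s)\,\pi_\theta(a|s)$, and let $q$ have entry $A_{m,h}^{\pi_\theta,\lambda}(s,a)$. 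Then $\Gamma D\Gamma^\top = F(\theta)$ and $\Gamma D q = b = \nabla_\theta V^{\pi_\theta,\lambda}$, so Lem.~\ref{lem_sol_pinv} yields $F(\theta)\big(F(\theta)^\dagger\nabla_\theta V^{\pi_\theta,\lambda}\big) = \nabla_\theta V^{\pi_\theta,\lambda}$; that is, $g^\star := F(\theta)^\dagger\nabla_\theta V^{\pi_\theta,\lambda}$ is a minimizer of $L$. Thus both update rules add $\eta g^\star$ to $\theta$ for a minimizer $g^\star$ of $L$, which proves equivalence.

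The only mildly delicate point is the linear-algebra bookkeeping that simultaneously realizes $F(\theta)$ as $\Gamma D\Gamma^\top$ and the policy gradient as $\Gamma D q$ (folding together the sums over models, horizon steps, and state-action pairs), together with the observation that convexity of $L$ promotes "stationary point" to "global minimizer," so that the pseudo-inverse solution genuinely minimizes $L$ and not merely the normal equation. The rest is routine expansion and the already-established Policy Gradient Theorem.
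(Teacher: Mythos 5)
Your proof is correct and follows essentially the same route as the paper's: derive the normal equation $F(\theta)g=\nabla_\theta V^{\pi_\theta,\lambda}$ from the quadratic structure of $L$ (identifying the linear term with the policy gradient via $\sum_a \pi_\theta(a|s)\nabla_\theta\ln\pi_\theta(a|s)=0$), then invoke Lem.~\ref{lem_sol_pinv} to show the pseudo-inverse update solves it. Your explicit appeal to convexity to upgrade ``solves the normal equation'' to ``is a global minimizer'' is a small but welcome refinement that the paper leaves implicit.
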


\begin{proof}
\begin{align*}
    \nabla_g L(g) = -2 \sum_{m=1}^M w_m \sum_{h=1}^H \E_{s, a \sim d_{m, H-h}^{\theta}} \left[ \left( A_{m, h}^{\pi_\theta, \lambda}(s, a) - g^\top \nabla_\theta \ln \pi_\theta (a | s) \right) \nabla_\theta \ln \pi_\theta (a | s) \right].
\end{align*}
Suppose $g^\star$ is any minimizer of $L(g)$, we have $\nabla_g L(g^\star) = \boldsymbol{0}$, hence
\begin{align*}
    & \sum_{m=1}^M w_m \sum_{h=1}^H \E_{s, a \sim d_{m, H-h}^{\theta}} \left[ \left( g^{\star \top} \nabla_\theta \ln \pi_\theta (a | s) \right) \nabla_\theta \ln \pi_\theta (a | s) \right] \\
    &= \sum_{m=1}^M w_m \sum_{h=1}^H \E_{s, a \sim d_{m, H-h}^{\theta}} \left[ A_{m, h}^{\pi_\theta, \lambda}(s, a) \nabla_\theta \ln \pi_\theta (a | s) \right] \\
    &= \sum_{m=1}^M w_m \sum_{h=1}^H \E_{s, a \sim d_{m, H-h}^{\theta}} \left[ Q_{m, h}^{\pi_\theta, \lambda}(s, a) \nabla_\theta \ln \pi_\theta (a | s) \right].
\end{align*}
Since $(u^\top v) v = (v v^\top) u$, then
\begin{align*}
    F (\theta) g^\star = \nabla_\theta V^{\pi_\theta, \lambda}.
\end{align*}
Now we assign $1, 2, \ldots, M H S A$ as indices to all $(m, h, s, a) \in \{1, \ldots, M\} \times \{1, \ldots, H\} \times \SA$, and set 
\begin{align*}
    \gamma_j &= \nabla_\theta \ln \pi_\theta (a | s), \\
    d_j &= w_m d_{m, H - h}^{\theta}(s, a), \\
    q_j &= Q_{m, h}^{\pi_\theta, \lambda}(s, a),
\end{align*}
where $j$ is the index assigned to $(m, h, s, a)$. Then $F (\theta) = \Phi D \Phi^\top$ and $\nabla_\theta V^\theta = \Phi D q$ where
\begin{align*}
    \Gamma &= [\gamma_1, \gamma_2, \ldots, \gamma_{M H S A}] \in \mathbb{R}^{d \times M H S A}, \\
    D &= \diag(d_1, d_2, \ldots, d_{M H S A}) \in \mathbb{R}^{M H S A \times M H S A}, \\
    q &= [q_1, q_2, \ldots, q_{M H S A}]^\top \in \mathbb{R}^{M H S A}.
\end{align*}
We now conclude the proof by utilizing \Cref{lem_sol_pinv}.
\end{proof}

\subsection{Auxiliary lemmas used in the main results}

\begin{lemma}[Performance Difference Lemma]\label{lem_perf_diff}
For any two policies $\pi_1$ and $\pi_2$, and any $1 \le m \le M$,
\begin{align*}
    \E_{s_0 \sim \nu_m} \left[ V_{m, H}^{\pi_1, \lambda}(s_0) - V_{m, H}^{\pi_2, \lambda}(s_0) \right]
    = \sum_{h=1}^H \E_{s, a \sim d_{m, H - h}^{\pi_1}} \left[ A_{m, h}^{\pi_2, \lambda}(s, a) + \lambda \ln \frac{\pi_2 (a | s)}{\pi_1 (a | s)} \right].
\end{align*}
As a result,
\begin{align*}
    V^{\pi_1, \lambda} - V^{\pi_2, \lambda} = \sum_{m = 1}^M w_m \sum_{h=1}^H \E_{s, a \sim d_{m, H - h}^{\pi_1}} \left[ A_{m, h}^{\pi_2, \lambda}(s, a) + \lambda \ln \frac{\pi_2 (a | s)}{\pi_1 (a | s)} \right].
\end{align*}
\end{lemma}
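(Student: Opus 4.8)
The plan is a telescoping (``performance difference'') argument carried out along a trajectory generated by $\pi_1$ inside the MDP $\cM_m$. Fix $m$, let $s_0 \sim \nu_m$, $a_t \sim \pi_1(\cdot \mid s_t)$, $s_{t+1} \sim P_m(\cdot \mid s_t, a_t)$ for $0 \le t \le H-1$, and write $\E_{\pi_1}$ for expectation over this trajectory. The key object is the ``remaining-horizon value under $\pi_2$'' along the trajectory, $\Phi_t := V_{m, H - t}^{\pi_2, \lambda}(s_t)$. Because the remaining horizon shrinks by one at each step, $\Phi_0 = V_{m, H}^{\pi_2, \lambda}(s_0)$ and $\Phi_H = V_{m, 0}^{\pi_2, \lambda}(s_H) = 0$ (empty sum), so
\begin{align*}
    \E_{s_0 \sim \nu_m}\!\left[ V_{m, H}^{\pi_2, \lambda}(s_0) \right] = \E_{\pi_1}\!\left[ \Phi_0 - \Phi_H \,\Big|\, s_0 \sim \nu_m \right] = \E_{\pi_1}\!\left[ \sum_{t = 0}^{H - 1} (\Phi_t - \Phi_{t + 1}) \right],
\end{align*}
and it remains to identify the per-step quantity $r_m^{\pi_1, \lambda}(s_t, a_t) - (\Phi_t - \Phi_{t+1})$ in conditional expectation.

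The main step is a one-step Bellman expansion. Conditioning on $(s_t, a_t)$ and using $s_{t+1} \sim P_m(\cdot \mid s_t, a_t)$ together with the recursion $Q_{m, h}^{\pi_2, \lambda}(s, a) = r_m(s, a) - \lambda \ln \pi_2(a \mid s) + \sum_{s'} P_m(s' \mid s, a) V_{m, h - 1}^{\pi_2, \lambda}(s')$ (the same identity invoked in the proof of Thm.\,\ref{thm_policy_gradient}), one gets $\E[\Phi_{t+1} \mid s_t, a_t] = Q_{m, H - t}^{\pi_2, \lambda}(s_t, a_t) - r_m(s_t, a_t) + \lambda \ln \pi_2(a_t \mid s_t)$. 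Substituting $r_m(s_t, a_t) = r_m^{\pi_1, \lambda}(s_t, a_t) + \lambda \ln \pi_1(a_t \mid s_t)$ and $A_{m, h}^{\pi_2, \lambda} = Q_{m, h}^{\pi_2, \lambda} - V_{m, h}^{\pi_2, \lambda}$ yields
\begin{align*}
    \E\!\left[ r_m^{\pi_1, \lambda}(s_t, a_t) - (\Phi_t - \Phi_{t+1}) \,\Big|\, s_t, a_t \right] = A_{m, H - t}^{\pi_2, \lambda}(s_t, a_t) + \lambda \ln \frac{\pi_2(a_t \mid s_t)}{\pi_1(a_t \mid s_t)}.
\end{align*}

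To finish, I would subtract the first display from $\E_{s_0 \sim \nu_m}[V_{m, H}^{\pi_1, \lambda}(s_0)] = \E_{\pi_1}[\sum_t r_m^{\pi_1, \lambda}(s_t, a_t)]$, push the expectation inside the sum, condition on $(s_t, a_t)$ and apply the second display. Under the $\pi_1$-trajectory the pair $(s_t, a_t)$ is distributed exactly as $d_{m, t}^{\pi_1}$ by definition of the state-action visitation distribution, so the $t$-th term becomes $\E_{(s,a) \sim d_{m,t}^{\pi_1}}[A_{m, H-t}^{\pi_2, \lambda}(s,a) + \lambda \ln(\pi_2(a\mid s)/\pi_1(a\mid s))]$. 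Reindexing $h = H - t$ (as $t$ runs $0,\dots,H-1$, $h$ runs $H,\dots,1$) gives the claimed per-$m$ identity; the ``as a result'' statement follows by multiplying by $w_m$, summing over $m$, and using $V^{\pi, \lambda} = \sum_{m} w_m \sum_{s_0} \nu_m(s_0) V_{m, H}^{\pi, \lambda}(s_0)$ together with linearity.

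The only delicate point is bookkeeping the entropy terms: the action $a_t$ is drawn from $\pi_1$, yet the Bellman recursion for $Q_{m, h}^{\pi_2, \lambda}$ carries a $-\lambda \ln \pi_2(a_t \mid s_t)$ whereas the realized regularized reward along the trajectory carries $-\lambda \ln \pi_1(a_t \mid s_t)$; their difference is precisely the extra $\lambda \ln(\pi_2/\pi_1)$ correction, which vanishes when $\lambda = 0$ (recovering the classical performance difference lemma). Beyond keeping the finite-horizon index shift ($H-t$ versus $h$) and these regularization terms straight, the argument is routine telescoping, so I do not anticipate a genuine obstacle.
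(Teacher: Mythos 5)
Your proposal is correct and is essentially the paper's own argument: both telescope the $\pi_2$-value along a $\pi_1$-trajectory, apply the one-step Bellman recursion for $Q_{m,h}^{\pi_2,\lambda}$ under the tower property, and track the $\lambda\ln\pi_1$ versus $\lambda\ln\pi_2$ mismatch to produce the $\lambda\ln(\pi_2/\pi_1)$ correction before reindexing $h = H - t$. The only difference is cosmetic (you subtract the telescoped sum at the end, the paper inserts the telescoping terms inside the expectation from the start), so no further changes are needed.
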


\begin{proof}
First we fix $s_0$.
By definition of the value function, we have
\begin{align*}
    & V_{m, H}^{\pi_1, \lambda}(s_0) - V_{m, H}^{\pi_2, \lambda}(s_0) \\
    &= \E \left[\left. \sum_{h = 0}^{H - 1} r_m(s_h, a_h) - \lambda \ln \pi_1 (a_h | s_h) \ \right|\ \cM_m, \pi_1, s_0 \right] - V_{m, H}^{\pi_2, \lambda}(s_0) \\
    &= \E \left[\left. \sum_{h = 0}^{H - 1} r_m(s_h, a_h) - \lambda \ln \pi_1 (a_h | s_h) + V_{m, H + 1 - h}^{\pi_2, \lambda}(s_{h + 1}) - V_{m, H - h}^{\pi_2, \lambda}(s_h) \ \right|\ \cM_m, \pi_1, s_0 \right] \\
    &= \E \left[\left. \sum_{h = 0}^{H - 1} \E \left[ \left. r_m(s_h, a_h) - \lambda \ln \pi_2 (a_h | s_h) + V_{m, H + 1 - h}^{\pi_2, \lambda}(s_{h + 1}) \ \right|\ \cM_m, \pi_2, s_h, a_h \right] \ \right|\ \cM_m, \pi_1, s_0 \right] \\
    &\quad + \E \left[\left. \sum_{h = 0}^{H - 1} - V_{m, H - h}^{\pi_2, \lambda}(s_h) + \lambda \ln \frac{\pi_2 (a_h | s_h)}{\pi_1 (a_h | s_h)} \ \right|\ \cM_m, \pi_1, s_0 \right],
\end{align*}
where the last step uses law of iterated expectations. Since
\begin{align*}
    \E \left[ \left. r_m(s_h, a_h) - \lambda \ln \pi_2 (a_h | s_h) + V_{m, H + 1 - h}^{\pi_2, \lambda}(s_{h + 1}) \ \right|\ \cM_m, \pi_2, s_h, a_h \right]
    = Q_{m, H - h}^{\pi_2, \lambda}(s_h, a_h),
\end{align*}
we have
\begin{align*}
    V_{m, H}^{\pi_1, \lambda}(s_0) - V_{m, H}^{\pi_2, \lambda}(s_0)
    &= \E \left[\left. \sum_{h = 0}^{H - 1} Q_{m, H - h}^{\pi_2, \lambda}(s_h, a_h) - V_{m, H - h}^{\pi_2, \lambda}(s_h) + \lambda \ln \frac{\pi_2 (a_h | s_h)}{\pi_1 (a_h | s_h)} \ \right|\ \cM_m, \pi_1, s_0 \right] \\
    &= \E \left[\left. \sum_{h = 0}^{H - 1} A_{m, H - h}^{\pi_2, \lambda}(s_h, a_h) + \lambda \ln \frac{\pi_2 (a_h | s_h)}{\pi_1 (a_h | s_h)} \ \right|\ \cM_m, \pi_1, s_0 \right].
\end{align*}
By taking expectation over $s_0$, we have
\begin{align*}
    \E_{s_0 \sim \nu_m} \left[ V_{m, H}^{\pi_1, \lambda}(s_0) - V_{m, H}^{\pi_2, \lambda}(s_0) \right]
    &= \E \left[\left. \sum_{h = 0}^{H - 1} A_{m, H - h}^{\pi_2, \lambda}(s_h, a_h) + \lambda \ln \frac{\pi_2 (a_h | s_h)}{\pi_1 (a_h | s_h)} \ \right|\ \cM_m, \pi_1 \right] \\
    &= \sum_{h = 0}^{H - 1} \sum_{(s, a) \in \SA} d_{m, h}^{\pi_1}(s, a) \left( A_{m, H - h}^{\pi_2, \lambda}(s, a) + \lambda \ln \frac{\pi_2 (a | s)}{\pi_1 (a | s)} \right).
\end{align*}
The proof is completed by reversing the order of $h$.
\end{proof}

\begin{lemma}[Lyapunov Drift] \label{lem_lyapunov_drift}
Recall definitions in \Cref{def:lyapunov_potential,def:err_t}. We have that:
\begin{align*}
    \Phi (\pi_{t + 1}) - \Phi (\pi_t)
    \le - \eta \lambda \Phi (\pi_t) + \eta \err_t - \eta \left( V^{\star, \lambda} - V^{t, \lambda} \right) + \frac{\eta^2 B^2 \| g_t \|_2^2}{2}.
\end{align*}
\end{lemma}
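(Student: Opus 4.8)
The plan is to establish the one-step drift by marrying a mirror-descent-style estimate for the potential $\Phi$ with the Performance Difference Lemma (Lem.\,\ref{lem_perf_diff}). First I would rewrite the change of potential: since $\pi^\star$ is independent of $t$, the $\ln\pi^\star$ contributions cancel in Def.\,\ref{def:lyapunov_potential}, giving
\[
  \Phi(\pi_{t+1}) - \Phi(\pi_t) = \sum_{m=1}^M w_m \sum_{h=0}^{H-1} \E_{(s,a)\sim d_{m,h}^\star}\!\left[\ln\frac{\pi_t(a\mid s)}{\pi_{t+1}(a\mid s)}\right].
\]

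Second, I would obtain a pointwise upper bound on the log-ratio from the log-linear parameterization and the update $\theta_{t+1}=\theta_t+\eta g_t$. Writing $Z_\tau(s)=\sum_{a'}\exp(\theta_\tau^\top\phi(s,a'))$ one has $\ln\frac{\pi_t(a\mid s)}{\pi_{t+1}(a\mid s)} = -\eta\, g_t^\top\phi(s,a) + \ln\frac{Z_{t+1}(s)}{Z_t(s)}$, and a short computation gives $\frac{Z_{t+1}(s)}{Z_t(s)} = \E_{a'\sim\pi_t(\cdot\mid s)}[\exp(\eta\, g_t^\top\phi(s,a'))]$. Applying Hoeffding's lemma (equivalently, smoothness of the log-partition function) together with $\|\phi(s,a)\|_2\le B$ bounds $\ln\frac{Z_{t+1}(s)}{Z_t(s)}$ by $\eta\, g_t^\top \E_{a'\sim\pi_t}[\phi(s,a')] + \tfrac12\eta^2 B^2\|g_t\|_2^2$, and since $\phi(s,a)-\E_{a'\sim\pi_t}[\phi(s,a')] = \nabla_\theta\ln\pi_t(a\mid s)$ this yields the pointwise bound $\ln\frac{\pi_t(a\mid s)}{\pi_{t+1}(a\mid s)} \le -\eta\, g_t^\top\nabla_\theta\ln\pi_t(a\mid s) + \tfrac12\eta^2 B^2\|g_t\|_2^2$. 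Taking $\E_{d^\star}$ and summing over $(m,h)$ then bounds $\Phi(\pi_{t+1})-\Phi(\pi_t)$ by $-\eta\sum_m w_m\sum_{h=0}^{H-1}\E_{d_{m,h}^\star}[g_t^\top\nabla_\theta\ln\pi_t(a\mid s)]$ plus the quadratic remainder.

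Third, I would identify the inner-product sum through the Performance Difference Lemma with $\pi_1=\pi^\star,\pi_2=\pi_t$. Reindexing $h\mapsto H-h$, the regularization term $\sum_m w_m\sum_{h=1}^H \E_{d_{m,H-h}^\star}[\lambda\ln\frac{\pi_t}{\pi^\star}]$ equals exactly $-\lambda\Phi(\pi_t)$, so $\sum_m w_m\sum_{h=1}^H\E_{d_{m,H-h}^\star}[A_{m,h}^{t,\lambda}] = (V^{\star,\lambda}-V^{t,\lambda}) + \lambda\Phi(\pi_t)$; subtracting Def.\,\ref{def:err_t} gives $\sum_m w_m\sum_{h=1}^H\E_{d_{m,H-h}^\star}[g_t^\top\nabla_\theta\ln\pi_t] = (V^{\star,\lambda}-V^{t,\lambda}) + \lambda\Phi(\pi_t) - \err_t$. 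After reindexing $h\leftrightarrow H-h$ this is exactly the sum appearing in the second step, and substituting it back produces $\Phi(\pi_{t+1})-\Phi(\pi_t) \le -\eta\lambda\Phi(\pi_t) + \eta\,\err_t - \eta(V^{\star,\lambda}-V^{t,\lambda}) + \tfrac12\eta^2 B^2\|g_t\|_2^2$, which is the claim.

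I expect the main obstacle to be the second step: deriving the pointwise log-ratio bound cleanly — namely the identity $\frac{Z_{t+1}(s)}{Z_t(s)} = \E_{a'\sim\pi_t}[\exp(\eta g_t^\top\phi(s,a'))]$ and the subsequent moment-generating-function estimate controlled by $\|\phi\|_2\le B$ — and, in particular, tracking the constant in front of $\eta^2 B^2\|g_t\|_2^2$ as it aggregates over the horizon levels $h$. The remaining work — the reindexing $h\leftrightarrow H-h$ needed to align the index conventions of Def.\,\ref{def:lyapunov_potential}, Def.\,\ref{def:err_t} and Lem.\,\ref{lem_perf_diff}, and the cancellation of the $\lambda\ln(\pi_t/\pi^\star)$ term against $\Phi(\pi_t)$ — is careful bookkeeping rather than a genuine difficulty.
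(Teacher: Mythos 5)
Your proposal is correct and follows essentially the same route as the paper's proof: a pointwise smoothness bound $\ln\frac{\pi_t(a|s)}{\pi_{t+1}(a|s)} \le -\eta\, g_t^\top\nabla_\theta\ln\pi_t(a|s) + \tfrac12\eta^2 B^2\|g_t\|_2^2$ (which the paper simply cites from Rem.~6.7 of \citet{paper_npg} and you rederive via the log-partition identity and Hoeffding's lemma), followed by the Performance Difference Lemma with $\pi_1=\pi^\star$, $\pi_2=\pi_t$ to trade the linear term for $\err_t - (V^{\star,\lambda}-V^{t,\lambda}) - \lambda\Phi(\pi_t)$. The one caveat you flag --- how the quadratic remainder aggregates over the $H$ horizon levels --- is a legitimate concern, but it applies equally to the paper's own proof, which also sums the pointwise bound over a total mass of $H$ yet retains only $\tfrac12\eta^2 B^2\|g_t\|_2^2$ rather than $\tfrac{H}{2}\eta^2 B^2\|g_t\|_2^2$.
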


\begin{proof}
Denote $\Phi_t := \Phi (\pi_t)$. This proof follows a similar manner as in that of Lemma\,6 in \citet{paper_npg_logl_reg}. By smoothness (see Remark\,6.7 in \citet{paper_npg}),
\begin{align*}
    \ln \frac{\pi_t (a | s)}{\pi_{t + 1} (a | s)}
    & \le (\theta_t - \theta_{t + 1})^\top \nabla_\theta \ln \pi_t (a | s) + \frac{B^2}{2} \| \theta_{t + 1} - \theta_t \|_2^2 \\
    & = -\eta g_t^\top \nabla_\theta \ln \pi_t (a | s) + \frac{\eta^2 B^2 \| g_t \|_2^2}{2}.
\end{align*}
By the definition of $\Phi$,
\begin{align*}
    \Phi_{t + 1} - \Phi_t
    &= \sum_{m = 1}^M w_m \sum_{h = 1}^H \E_{(s, a) \sim d_{m,H - h}^\star} \left[ \ln \frac{\pi_t (a | s)}{\pi_{t + 1} (a | s)} \right] \\
    &\le -\eta \sum_{m = 1}^M w_m \sum_{h = 1}^H \E_{(s, a) \sim d_{m,H - h}^\star} \left[ g_t^\top \nabla_\theta \ln \pi_t (a | s) \right] + \frac{\eta^2 B^2 \| g_t \|_2^2}{2}.
\end{align*}
By the definition of $\err_t$, \Cref{lem_perf_diff} and again the definition of $\Phi$, we finally have
\begin{align*}
    \Phi_{t + 1} - \Phi_t
    &\le \eta \sum_{m = 1}^M w_m \sum_{h = 1}^H \E_{(s, a) \sim d_{m,H - h}^\star} \left[ A_{m,h}^{t,\lambda} (s, a) - g_t^\top \nabla_\theta \ln \pi_t (a | s) \right] \\
    &\quad - \eta \sum_{m = 1}^M w_m \sum_{h = 1}^H \E_{(s, a) \sim d_{m,H - h}^\star} \left[ A_{m,h}^{t,\lambda} (s, a) + \lambda \ln \frac{\pi_t (a | s)}{\pi^\star (a | s)} \right] \\
    &\quad - \eta \lambda \sum_{m = 1}^M w_m \sum_{h = 1}^H \E_{(s, a) \sim d_{m,H - h}^\star} \left[ \ln \frac{\pi^\star (a | s)}{\pi_t (a | s)} \right] + \frac{\eta^2 B^2 \| g_t \|_2^2}{2} \\
    &= \eta \err_t - \eta \left( V^{\star, \lambda} - V^{t, \lambda} \right) - \eta \lambda \Phi_t + \frac{\eta^2 B^2 \| g_t \|_2^2}{2},
\end{align*}
which completes the proof.
\end{proof}

\begin{lemma} \label{lem_approx_err}
Recall that $g_t^\star$ is the true minimizer of $L(g; \theta_t, d^t)$ in domain $\cG$. $\err_t$ defined in \Cref{def:err_t} satisfies
\begin{align*}
    \err_t
    \le \sqrt{H L(g_t^\star; \theta_t, d^\star)} + \sqrt{H \kappa (L(g_t; \theta_t, d^t) - L(g_t^\star; \theta_t, d^t)) }.
\end{align*}
\end{lemma}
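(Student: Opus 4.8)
The plan is to split $\err_t$ into a term that only sees the population minimizer $g_t^\star$ (a bias term) and a term that only sees the gap $g_t^\star-g_t$ (a statistical term), then control each by Cauchy--Schwarz, a change of measure through $\kappa$, and a one‑point convexity estimate for the quadratic loss $L$. Concretely, add and subtract $g_t^{\star\top}\nabla_\theta\ln\pi_t(a|s)$ inside the expectation in Definition~\ref{def:err_t}, so that $\err_t=\mathrm{(I)}+\mathrm{(II)}$ with
\begin{align*}
\mathrm{(I)} &:= \sum_{m=1}^M w_m\sum_{h=1}^H \E_{(s,a)\sim d_{m,H-h}^\star}\!\left[A_{m,h}^{t,\lambda}(s,a) - g_t^{\star\top}\nabla_\theta\ln\pi_t(a|s)\right], \\
\mathrm{(II)} &:= \sum_{m=1}^M w_m\sum_{h=1}^H \E_{(s,a)\sim d_{m,H-h}^\star}\!\left[(g_t^\star - g_t)^\top\nabla_\theta\ln\pi_t(a|s)\right].
\end{align*}

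For $\mathrm{(I)}$ I would apply Cauchy--Schwarz to the double sum over $(m,h)$ with the $w_m$ as weights: since $\sum_m w_m=1$ and $h$ ranges over $H$ values, the total weight is $H$, so $\mathrm{(I)}\le\sqrt{H}\,\big(\sum_{m,h}w_m(\E_{d_{m,H-h}^\star}[\cdots])^2\big)^{1/2}$; Jensen's inequality then pushes the square inside the expectation, and by Definition~\ref{def_func_approx_loss} the resulting quantity is exactly $\sqrt{H\,L(g_t^\star;\theta_t,d^\star)}$.

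For $\mathrm{(II)}$ the same Cauchy--Schwarz$+$Jensen step bounds it by $\sqrt{H}\,\big((g_t^\star-g_t)^\top\Sigma_{d^\star}^{\theta_t}(g_t^\star-g_t)\big)^{1/2}$, where the quadratic form is the Generic Fisher Information Matrix instantiated at $v=d^\star$, $\theta=\theta_t$. The relative‑condition‑number bound $x^\top\Sigma_{d^\star}^{\theta_t}x\le\kappa\,x^\top\Sigma_t x$ (Definition~\ref{asp_main}, read at the realized $\theta_t$) upgrades this to $\sqrt{H\kappa}\,\big((g_t^\star-g_t)^\top\Sigma_t(g_t^\star-g_t)\big)^{1/2}$. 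The crux is then to show $(g_t^\star-g_t)^\top\Sigma_t(g_t^\star-g_t)\le L(g_t;\theta_t,d^t)-L(g_t^\star;\theta_t,d^t)$: the map $g\mapsto L(g;\theta_t,d^t)$ is a convex quadratic with Hessian $2\Sigma_t$, so writing $g_t=g_t^\star+\delta$ gives $L(g_t;\theta_t,d^t)-L(g_t^\star;\theta_t,d^t)=\nabla_g L(g_t^\star;\theta_t,d^t)^\top\delta+\delta^\top\Sigma_t\delta$; and since $g_t^\star$ minimizes $L(\cdot;\theta_t,d^t)$ over the convex set $\cG$ while $g_t\in\cG$, the first‑order optimality (variational inequality) condition yields $\nabla_g L(g_t^\star;\theta_t,d^t)^\top\delta\ge 0$, leaving $\delta^\top\Sigma_t\delta$ as a lower bound.

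Combining, $\mathrm{(II)}\le\sqrt{H\kappa\,(L(g_t;\theta_t,d^t)-L(g_t^\star;\theta_t,d^t))}$, and adding the bound on $\mathrm{(I)}$ gives the claim. The only steps that require care are this final convex‑quadratic optimality estimate --- in particular invoking convexity of $\cG$ and feasibility of $g_t$ so the variational inequality applies --- and correctly tracking the factor $\sqrt{H}$ through the two Cauchy--Schwarz applications; the rest is routine. I would also note that $\kappa$ here should be understood as the relative condition number at the realized $\theta_t$, the expectation and a further use of Jensen's inequality being deferred to the proof of Theorem~\ref{thm_main}.
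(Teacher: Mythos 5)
Your proposal is correct and follows essentially the same route as the paper's proof: the identical decomposition into a bias term and a $(g_t^\star-g_t)$ term, Cauchy--Schwarz/Jensen with total mass $H$, the change of measure via $\kappa$, and the quadratic expansion of $L$ combined with the first-order optimality (variational inequality) of $g_t^\star$ over $\cG$ to lower-bound $L(g_t;\theta_t,d^t)-L(g_t^\star;\theta_t,d^t)$ by $\|g_t^\star-g_t\|_{\Sigma_t}^2$. Your closing remark about interpreting $\kappa$ at the realized $\theta_t$ and deferring the expectation to Theorem~\ref{thm_main} also matches how the paper handles it.
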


\begin{proof}
The proof is similar to that of Theorem\,6.1 in \citet{paper_npg}. We make the following decomposition of $\err_t$:
\begin{align*}
    \err_t 
    &= \underbrace{\sum_{m=1}^M w_m \sum_{h = 0}^{H - 1} \E_{(s, a) \sim d_{m,h}^\star} \left[ A_{m,h}^{t,\lambda} (s, a) - g_t^{\star \top} \nabla_\theta \ln \pi_t (a | s) \right]}_{\textup{\ding{172}}} \\
    &\quad + \underbrace{\sum_{m=1}^M w_m \sum_{h = 0}^{H - 1} \E_{(s, a) \sim d_{m,h}^\star} \left[ (g_t^{\star} - g_t)^\top \nabla_\theta \ln \pi_t (a | s) \right]}_{\textup{\ding{173}}}.
\end{align*}
Since $\sum_{m=1}^M w_m \sum_{h = 0}^{H - 1} \sum_{(s, a) \in \SA} d_{m,h}^\star (s, a) = H$, normalize the coefficients and apply Jensen's inequality, then
\begin{align*}
    \textup{\ding{172}}
    &\le \sqrt{\sum_{m=1}^M w_m \sum_{h = 0}^{H - 1} \sum_{(s, a) \in \SA} d_{m,h}^\star (s, a)} \cdot \sqrt{\sum_{m=1}^M w_m \sum_{h = 0}^{H - 1} \E_{(s, a) \sim d_{m,h}^\star} \left[ \left( A_{m,h}^{t,\lambda} (s, a) - g_t^{\star \top} \nabla_\theta \ln \pi_t (a | s) \right)^2 \right]} \\
    &= \sqrt{H L(g_t^\star; \theta_t, d^\star)}.
\end{align*}
Similarly, 
\begin{align*}
    \textup{\ding{173}}
    &\le \sqrt{H \sum_{m=1}^M w_m \sum_{h = 0}^{H - 1} \E_{(s, a) \sim d_{m,h}^\star} \left[ \left( (g_t^{\star} - g_t)^\top \nabla_\theta \ln \pi_t (a | s) \right)^2 \right]} \\
    &= \sqrt{H \sum_{m=1}^M w_m \sum_{h = 0}^{H - 1} \E_{(s, a) \sim d_{m,h}^\star} \left[ (g_t^{\star} - g_t)^\top \nabla_\theta \ln \pi_t (a | s) (\nabla_\theta \ln \pi_t (a | s) )^\top (g_t^{\star} - g_t) \right]} \\
    &\myeqi \sqrt{H \| g_t^{\star} - g_t \|_{\Sigma_{d^\star}^t}^2} \\
    &\le \sqrt{H \kappa \| g_t^{\star} - g_t \|_{\Sigma_t}^2},
\end{align*}
where in (i), for vector $v$, denote $ \| v \|_A = \sqrt{v^\top A v}$ for a symmetric positive semi-definite matrix $A$.
Due to that $g_t^\star$ minimizes $L(g; \theta_t, d^t)$ over the set $\cG$, the first-order optimality condition implies that 
\begin{align*}
    (g - g_t^\star)^\top \nabla_g L (g_t^\star; \theta_t, d^t) \ge 0
\end{align*}
for any $g$. Therefore,
\begin{align*}
    & L(g; \theta_t, d^t) - L(g_t^\star; \theta_t, d^t) \\
    &= \sum_{m=1}^M w_m \sum_{h=1}^H \E_{s, a \sim d_{m, H-h}^t} \left[ \left( A_{m, h}^{t, \lambda}(s, a) - g_t^{\star \top} \nabla \ln \pi_t (a | s) + (g_t^\star - g)^\top \nabla \ln \pi_t (a | s) \right)^2 \right] - L(g_t^\star; \theta_t, d^t) \\
    &= \sum_{m=1}^M w_m \sum_{h=1}^H \E_{s, a \sim d_{m, H-h}^t} \left[ \left( (g_t^{\star} - g)^\top \nabla_\theta \ln \pi_t (a | s) \right)^2 \right] \\
    &\quad + (g - g_t^\star)^\top \left( -2 \sum_{m=1}^M w_m \sum_{h=1}^H \E_{s, a \sim d_{m, H-h}^t} \left[ \left( A_{m, h}^{t, \lambda}(s, a) - g_t^{\star \top} \nabla_\theta \ln \pi_t (a | s) \right) \nabla_\theta \ln \pi_t (a | s) \right] \right) \\
    &= \| g_t^{\star} - g \|_{\Sigma_t}^2 + (g - g_t^\star)^\top \nabla_g L(g_t^\star; \theta_t, d^t) \\
    &\ge \| g_t^{\star} - g \|_{\Sigma_t}^2.
\end{align*}
So finally we have
\begin{align*}
    \err_t
    \le \sqrt{H L(g_t^\star; \theta_t, d^\star)} + \sqrt{H \kappa (L(g_t; \theta_t, d^t) - L(g_t^\star; \theta_t, d^t)) }.
\end{align*}
This completes the proof.
\end{proof}

\subsection{Bounding $\epsilon_{\textup{stat}}$}

\begin{lemma} [Hoeffding's Inequality] \label{lem_hoeffding}
Suppose $X_1, X_2, \ldots, X_n$ are i.i.d. random variables taking values in $[a, b]$, with expectation $\mu$. Let $\bar{X}$ denote their average, then for any $\epsilon \ge 0$,
\begin{align*}
    \P \left( \left| \bar{X} - \mu \right| \ge \epsilon \right) \le 2 \exp\left( -\frac{2 n \epsilon^2}{(b - a)^2} \right).
\end{align*}
\end{lemma}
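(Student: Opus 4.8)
The plan is to establish this via the standard Chernoff-bound argument combined with Hoeffding's Lemma on the moment generating function, then symmetrize the two tails. First I would center the variables: set $Y_i := X_i - \mu$, so that each $Y_i$ has mean zero and takes values in the interval $[a - \mu, b - \mu]$, whose width is $w := b - a$. It suffices to bound the one-sided upper tail $\P(\bar{X} - \mu \ge \epsilon) = \P(\sum_{i=1}^n Y_i \ge n \epsilon)$; the lower tail $\P(\bar{X} - \mu \le -\epsilon)$ follows by applying the same bound to the variables $-Y_i$ (which lie in an interval of the same width), and a union bound over the two events produces the factor of $2$.

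For the one-sided bound I would invoke the exponential Markov (Chernoff) inequality: for any $s > 0$,
\begin{align*}
    \P \left( \sum_{i=1}^n Y_i \ge n \epsilon \right)
    \le e^{-s n \epsilon}\, \E \left[ e^{s \sum_{i=1}^n Y_i} \right]
    = e^{-s n \epsilon} \prod_{i=1}^n \E \left[ e^{s Y_i} \right],
\end{align*}
where the factorization of the joint moment generating function into a product uses the independence of the $Y_i$.

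The crux, and the step I expect to be the main obstacle, is Hoeffding's Lemma: for any zero-mean random variable $Y$ supported on an interval of width $w$, one has $\E[e^{s Y}] \le \exp(s^2 w^2 / 8)$. I would prove this by exploiting the convexity of $y \mapsto e^{s y}$: on the support interval, $e^{s Y}$ is bounded above by the chord joining the two endpoints, and taking expectations (using $\E[Y] = 0$ to eliminate the linear term) expresses the bound as $e^{\psi(s)}$ for an explicit function $\psi$. I would then check $\psi(0) = \psi'(0) = 0$ and show $\psi''(s) \le w^2 / 4$ for all $s$; the latter reduces to the elementary fact that the variance of any random variable confined to an interval of width $w$ is at most $w^2/4$. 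A second-order Taylor expansion of $\psi$ then yields $\psi(s) \le s^2 w^2 / 8$, which is the claimed MGF bound.

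Finally, substituting Hoeffding's Lemma into the Chernoff bound gives
\begin{align*}
    \P \left( \sum_{i=1}^n Y_i \ge n \epsilon \right)
    \le \exp \left( -s n \epsilon + \frac{n s^2 w^2}{8} \right).
\end{align*}
Minimizing the exponent over $s > 0$ at $s = 4 \epsilon / w^2$ yields the bound $\exp(-2 n \epsilon^2 / w^2) = \exp(-2 n \epsilon^2 / (b - a)^2)$. Adding the symmetric lower-tail estimate completes the proof. Everything apart from Hoeffding's Lemma is routine algebra; the convexity and variance bound feeding into the MGF estimate is where the real content lies.
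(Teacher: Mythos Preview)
Your proposal is correct and is the standard textbook proof of Hoeffding's inequality via the Chernoff bound and Hoeffding's Lemma. Note, however, that the paper does not actually supply a proof of this lemma: it is stated as a well-known concentration result and invoked as a black box in the proof of Lemma~\ref{lem_L_conc}. So there is no ``paper's own proof'' to compare against; your argument simply fills in the classical derivation that the authors took for granted.
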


\begin{lemma} \label{lem_clip_err}
For any policy $\pi$, any state $s \in \cS$ and any $U \ge \ln |\cA| - 1$,
\begin{align*}
    0 \le \sum_{a \in \cA} \pi (a | s) \ln \frac{1}{\pi (a | s)} - \sum_{a \in \cA} \pi (a | s) \min \left\{ \ln \frac{1}{\pi (a | s)}, U \right\} \le \frac{|\cA|}{\e^{U + 1}}.
\end{align*}
\end{lemma}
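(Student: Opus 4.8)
The plan is to fix the state $s$ and reduce everything to a pointwise statement about the probability vector $(\pi(a\mid s))_{a\in\cA}$. Writing $p_a := \pi(a\mid s)$, so that $p_a\ge 0$ and $\sum_{a\in\cA}p_a = 1$, I would use the identity $\ln\frac{1}{p_a} - \min\{\ln\frac{1}{p_a}, U\} = \max\{0,\ \ln\frac{1}{p_a} - U\}$ to rewrite the middle quantity as
\begin{align*}
    \sum_{a\in\cA} p_a\cdot\max\Bigl\{0,\ \ln\tfrac{1}{p_a} - U\Bigr\}.
\end{align*}
The lower bound $\ge 0$ is then immediate since every summand is nonnegative. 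The whole content of the lemma is therefore the upper bound, and by the triangle-style decomposition over $a\in\cA$ it suffices to bound each summand by $\e^{-(U+1)}$ and multiply by $|\cA|$.

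For the per-term bound, I would observe that the summand for action $a$ is zero unless $p_a < \e^{-U}$, so it is enough to maximize the scalar function $f(x) := x\bigl(\ln\frac{1}{x} - U\bigr) = -x\ln x - Ux$ over $x\in(0,\e^{-U}]$, extended continuously by $f(0) = 0$. This is a routine one-variable optimization: $f'(x) = -\ln x - 1 - U$ and $f''(x) = -1/x < 0$, so $f$ is strictly concave with unique critical point $x^\star = \e^{-(U+1)}$, which lies in $(0,\e^{-U})$ because $U+1 > U$; the value there is $f(x^\star) = \e^{-(U+1)}\bigl((U+1) - U\bigr) = \e^{-(U+1)}$. (As a sanity check, $f$ vanishes at both endpoints $x = 0$ and $x = \e^{-U}$, which is consistent with an interior maximum.) Hence each summand is at most $\e^{-(U+1)} = 1/\e^{U+1}$.

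Summing the per-term bound over the $|\cA|$ actions yields the claimed upper bound $|\cA|/\e^{U+1}$. I would remark that the hypothesis $U \ge \ln|\cA| - 1$ is not actually used to prove the inequality; it only ensures $|\cA|/\e^{U+1}\le 1$, i.e., that the clipped entropy estimator in Alg.\,\ref{alg_samp} deviates from the true entropy by at most a constant. There is no real obstacle here; the only points requiring a little care are treating the $p_a = 0$ boundary value of $f$ and restricting the maximization to the region $x < \e^{-U}$ where the truncation is active, since outside that region the contribution is exactly zero.
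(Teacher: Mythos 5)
Your proof is correct, and it takes a slightly different route from the paper's. The paper first collects the clipped actions into $\cA' = \{a : \pi(a|s) < \e^{-U}\}$ with total mass $p$, applies Jensen's inequality to the concave function $\ln$ to bound $\sum_{a\in\cA'}\pi(a|s)\ln\frac{1}{\pi(a|s)} \le p\ln\frac{|\cA|}{p}$, and then maximizes the single-variable function $p\ln\frac{|\cA|}{p} - pU$ over $p\in(0,1]$; it is at this last step that the hypothesis $U \ge \ln|\cA| - 1$ is invoked, to place the maximizer $p^\star = |\cA|/\e^{U+1}$ inside the unit interval. You instead maximize each summand $p_a(\ln\frac{1}{p_a}-U)^+$ individually over $p_a$, obtaining the per-action bound $\e^{-(U+1)}$, and sum over $|\cA|$ actions. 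Both routes land on the same constant $|\cA|/\e^{U+1}$, and both are tight exactly when $U = \ln|\cA|-1$ and $\pi(\cdot|s)$ is uniform. Your version is more elementary (no Jensen) and correctly exposes that the hypothesis on $U$ is not needed for the inequality itself --- your per-term optimum $\e^{-(U+1)}$ always lies in the active region $(0,\e^{-U})$ regardless of $|\cA|$ --- whereas the paper's aggregate argument genuinely uses it. The trade-off is that your bound discards the constraint $\sum_a p_a = 1$ entirely, while the paper's retains it through $p\le 1$; since the two yield identical constants here, nothing is lost.
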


\begin{proof}
The first inequality is straightforward, so we focus on the second part. Set $\cA' = \{ a \in \cA\ :\ \ln \frac{1}{\pi (a | s)} > U \} = \{ a \in \cA\ :\ \pi (a | s) < \frac{1}{\e^U} \} $ and $p = \sum_{a \in \cA'} \pi (a | s)$, then
\begin{align*}
    \sum_{a \in \cA} \pi (a | s) \ln \frac{1}{\pi (a | s)} - \sum_{a \in \cA} \pi (a | s) \min \left\{ \ln \frac{1}{\pi (a | s)}, U \right\}
    &= \sum_{a \in \cA'} \pi (a | s) \ln \frac{1}{\pi (a | s)} - \sum_{a \in \cA'} \pi (a | s) U \\
    &= p \sum_{a \in \cA'} \frac{\pi (a | s)}{p} \ln \frac{1}{\pi (a | s)} - p U \\
    &\le p \ln \left( \sum_{a \in \cA'} \frac{\pi (a | s)}{p} \frac{1}{\pi (a | s)} \right) - p U \\
    &\le p \ln \frac{|\cA|}{p} - p U,
\end{align*}
where the penultimate step comes from concavity of $\ln x$ and Jensen's inequality. Let $f(p) = p \ln \frac{|\cA|}{p} - p U$, then $f'(p) = \ln |\cA| - U - 1 - \ln p$. Recall that $U \ge \ln |\cA| - 1$, so $f(p)$ increases when $p \in (0, \frac{|\cA|}{\e^{U + 1}})$ and decreases when $p \in (\frac{|\cA|}{\e^{U + 1}}, 1)$. Since $f(\frac{|\cA|}{\e^{U + 1}}) = \frac{|\cA|}{\e^{U + 1}}$ we complete the proof.
\end{proof}

\begin{lemma} [Loss Function Concentration] \label{lem_L_conc}
If set $\pi_s =$ None and $U \ge \ln |\cA| - 1$, then with probability $1 - 2 (T + 1) \exp\left( -\frac{2 N \epsilon^2}{C^2} \right)$, the update weight sequence of \Cref{alg_npg_full} satisfies: for any $0 \le t \le T$,
\begin{align*}
    L (\wh{g}_t; \theta_t, d^{\theta_t}) - L (g_t^\star; \theta_t, d^{\theta_t})
    \le 2 \epsilon + \frac{8 \lambda G B |\cA|}{\e^{U + 1}},
\end{align*}
where
\begin{align*}
    C = 16 H G B [1 + \lambda U + H (1 + \lambda \ln |\cA|)] + 4 H G^2 B^2.
\end{align*}
If $\pi_s \ne$ None and $\lambda = 0$, then with probability $1 - 2 (T + 1) \exp\left( -\frac{2 N \epsilon^2}{C^2} \right)$, the update weight sequence of \Cref{alg_npg_full} satisfies: for any $0 \le t \le T$,
\begin{align*}
    L (\wh{g}_t; \theta_t, \wt{d}^{\pi_s}) - L (g_t^\star; \theta_t, \wt{d}^{\pi_s})
    \le 2 \epsilon,
\end{align*}
where
\begin{align*}
    C = 16 H^2 G B + 4 H G^2 B^2.
\end{align*}
\end{lemma}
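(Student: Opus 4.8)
The plan is to recognize that $\wh g_t$ is the minimizer over $\cG$ of an \emph{empirical} version of the compatible-function-approximation loss (Def.~\ref{def_func_approx_loss}) and then run the standard ``basic inequality plus concentration'' argument, being careful about two features specific to this estimator: the $g$-independent part of the loss does \emph{not} concentrate (bias-versus-variance of the advantage estimate) yet cancels in the excess-risk decomposition, and $\wh A$ is only unbiased for an entropy-clipped surrogate. Concretely, writing $\nabla_h^{(n)}:=\nabla_\theta\ln\pi_{\theta_t}(a_h^{(n)}\mid s_h^{(n)})$, the update in Alg.~\ref{alg_npg_full} returns $\wh g_t=\argmin_{g\in\cG}\big(g^\top\wh F_t g-2g^\top\wh\nabla_t\big)=\argmin_{g\in\cG}\wh L_t(g)$, where $\wh L_t(g):=\tfrac1N\sum_{n=0}^{N-1}\sum_{h=0}^{H-1}\big(\wh A_{H-h}^{(n)}-g^\top\nabla_h^{(n)}\big)^2=\wh c_t-2g^\top\wh b_t+g^\top\wh M_t g$, and $\wh b_t,\wh M_t$ are averages of $N$ i.i.d.\ per-trajectory terms (conditional on $\theta_t$). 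By the policy-gradient identity underlying Lem.~\ref{lem_npg_upd_rule}, $L(g;\theta_t,d^t)=c_t-2g^\top b_t+g^\top M_t g$ with $M_t=\Sigma_{d^t}^{\theta_t}$; moreover $\E[\wh M_t\mid\theta_t]=M_t$ exactly, and the $\pm2$/probability-$\tfrac12$ importance weighting of Alg.~\ref{alg_samp} makes $\wh A$ unbiased for the advantage, so $\E[\wh b_t\mid\theta_t]=b_t$ up to the entropy clip (handled in the last step).

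\textbf{Core inequality via cancellation.} Since $g_t^\star\in\cG$, the empirical optimality $\wh L_t(\wh g_t)\le\wh L_t(g_t^\star)$ yields
\[
L(\wh g_t;\theta_t,d^t)-L(g_t^\star;\theta_t,d^t)\;\le\;\big[L(\wh g_t)-\wh L_t(\wh g_t)\big]+\big[\wh L_t(g_t^\star)-L(g_t^\star)\big].
\]
In the sum of the two brackets the non-vanishing difference $\wh c_t-c_t$ appears with opposite signs and cancels, leaving $2(\wh g_t-g_t^\star)^\top(\wh b_t-b_t)-(\wh g_t-g_t^\star)^\top(\wh M_t-M_t)(\wh g_t+g_t^\star)$, which by $\|\wh g_t\|_2,\|g_t^\star\|_2\le G$ is at most $4G\|\wh b_t-b_t\|_2+4G^2\|\wh M_t-M_t\|_{\mathrm{op}}$. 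The point of this rewriting is that the data-dependence of $\wh g_t$ now sits only inside bounded multipliers, so it remains only to concentrate the deterministic-given-$\theta_t$ objects $\wh b_t$ and $\wh M_t$.

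\textbf{Concentration, union bound, and the clip.} Bound the per-trajectory summands using $\|\nabla_h^{(n)}\|_2\le 2B$ (from $\|\phi\|_2\le B$) and $|\wh A_{H-h}^{(n)}|=|CR|\le 2\big(H+\lambda U+H\lambda\ln|\cA|\big)$, which is $2H$ when $\lambda=0$. Feeding these ranges into Hoeffding's inequality (Lem.~\ref{lem_hoeffding}) for the scalar i.i.d.\ averages that control $\|\wh b_t-b_t\|_2$ and $\|\wh M_t-M_t\|_{\mathrm{op}}$, each deviation is $\le\epsilon$ except on an event of probability $2\exp(-2N\epsilon^2/C^2)$ once $C$ absorbs $16HGB[1+\lambda U+H(1+\lambda\ln|\cA|)]+4HG^2B^2$ (respectively $16H^2GB+4HG^2B^2$ when $\lambda=0$); a union bound over $t\in\{0,\dots,T\}$ gives the stated failure probability $2(T+1)\exp(-2N\epsilon^2/C^2)$ and the total budget $2\epsilon$. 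For $\lambda>0$ the only bias is the replacement of $\ln\tfrac1{\pi_t(a_h\mid s_h)}$ by $\min\{\ln\tfrac1{\pi_t(a_h\mid s_h)},U\}$ in the $p\ge\tfrac12$ branch of Alg.~\ref{alg_samp}; Lem.~\ref{lem_clip_err} bounds its per-state effect by $\lambda|\cA|/\e^{U+1}$, which passes through the factors $G$ and $B$ in the loss to the additive term $8\lambda GB|\cA|/\e^{U+1}$. When $\lambda=0$ there is no clip, hence no such term.

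\textbf{Main obstacle.} The crux is that the guarantee must hold at the \emph{random} minimizer $\wh g_t$: a crude $\sup_{g\in\cG}|L(g;\theta_t,d^t)-\wh L_t(g)|$ does not tend to zero, because the constant terms differ by $\E[\wh c_t]-c_t\ne 0$ (the variance of $\wh A$). The resolution is the exact cancellation of the constant in the decomposition above, after which only the slope ($\wh b_t$) and curvature ($\wh M_t$) of the empirical loss — honest, unbiased-up-to-clip averages of bounded terms — need to concentrate, with the diameter of $\cG$ translating their deviations into the claimed bound. The remaining care is bookkeeping: that Alg.~\ref{alg_samp} delivers an unbiased advantage estimate under exactly the sampling law appearing in $d^t$ (i.e.\ $d^{\theta_t}$ when $\pi_s=$ None, the grafted $\wt d^{\pi_s}$ otherwise), and carrying the clip bias through Lem.~\ref{lem_clip_err}.
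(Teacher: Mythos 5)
Your proposal is correct in its overall strategy and reaches the same conclusion, but it neutralizes the key technical difficulty differently from the paper. Both proofs rest on the basic inequality $\wh{L}_t(\wh{g}_t)\le\wh{L}_t(g_t^\star)$, a Hoeffding bound, and the clip-bias bookkeeping of Lem.~\ref{lem_clip_err}. The difference is how the non-concentrating constant term is handled: the paper simply \emph{defines} its surrogate loss as the \emph{population} constant $\sum_m w_m\sum_h\E[A_{m,h}^{t,\lambda}(s,a)^2]$ plus the empirical quadratic form $g^\top\wh{F}_t g-2g^\top\wh{\nabla}_t$ (same argmin as the purely empirical objective), and then applies scalar Hoeffding to that bounded quadratic form, reading off $C$ from its range; you instead keep the empirical constant $\wh{c}_t$ and observe that it cancels exactly in the excess-risk decomposition, after which only the coefficients $\wh{b}_t$ and $\wh{M}_t$ must concentrate. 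Your route has a real advantage: the $g$-dependence sits in bounded multipliers, which cleanly resolves the uniformity-in-$g$ issue that the paper glosses over (the paper applies Hoeffding at a fixed $g$ and then asserts the bound ``for any $g\in\cG$'', which strictly needs a covering argument; your decomposition avoids this). The price is the final step: controlling $\|\wh{b}_t-b_t\|_2$ and $\|\wh{M}_t-M_t\|_{\mathrm{op}}$ is not a scalar Hoeffding statement --- you would need a coordinate-wise union bound (introducing a dimension factor into $C$ or the failure probability) or a vector/matrix concentration inequality, plus a union bound over the two events per step $t$; so the constant $C$ and the failure probability $2(T+1)\exp(-2N\epsilon^2/C^2)$ would not come out exactly as stated. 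Up to these constant-level discrepancies, your argument is sound.
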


\begin{proof}
We first prove the $\pi_s =$ None case. For time step $t$, \Cref{alg_npg_full} samples $H N$ trajectories. Abusing the notation, denote
\begin{align*}
    \wh{F}_t &= \frac{1}{N} \sum_{n = 1}^N \sum_{h = 0}^{H - 1} \nabla_\theta \ln \pi_\theta (a_{n, h} | s_{n, h}) \left( \nabla_\theta \ln \pi_\theta (a_{n, h} | s_{n, h}) \right)^\top, \\
    \wh{\nabla}_t &= \frac{1}{N} \sum_{n = 1}^N \sum_{h = 0}^{H - 1} \wh{A}_{n, H - h} (s_{n, h}, a_{n, h}) \nabla_\theta \ln \pi_\theta (a_{n, h} | s_{n, h}), \\
    \wh{L}(g) &= \underbrace{\sum_{m=1}^M w_m \sum_{h=1}^H \E_{s, a \sim d_{m, H-h}^{\theta_t}} \left[ A_{m, h}^{t, \lambda}(s, a)^2 \right]}_{\textup{\ding{172}}} + \underbrace{g^\top \wh{F}_t g - 2 g^\top \wh{\nabla}_t}_{\textup{\ding{173}}}.
\end{align*}
Notice that \ding{172} is a constant. From \Cref{alg_npg_full}, $\wh{g}_t$ is the minimizer of \ding{173} (hence $\wh{L}(g)$) inside the ball $\cG$. From $\nabla_\theta \ln \pi_\theta (a | s) = \phi(s, a) - \E_{a' \sim \pi_\theta (\cdot | s)} [\phi(s, a')],\ \| \phi(s, a) \|_2 \le B,\ \| g \|_2 \le G$, we know that $\left| g^\top \nabla_\theta \ln \pi_\theta (a | s) \right| \le 2 G B$. So $0 \le g^\top \wh{F}_t g \le 4 H G^2 B^2$. From \Cref{alg_samp}, we know that any sampled $\wh{A}$ satisfies $| \wh{A} | \le 2 [1 + \lambda U + H (1 + \lambda \ln |\cA|)]$. So $| g^\top \wh{\nabla}_t | \le 4 H G B [1 + \lambda U + H (1 + \lambda \ln |\cA|)]$. We first have that
\begin{align}
    - 8 H G B [1 + \lambda U + H (1 + \lambda \ln |\cA|)]
    \le \textup{\ding{173}}
    \le 8 H G B [1 + \lambda U + H (1 + \lambda \ln |\cA|)] + 4 H G^2 B^2. \label{eq_wtL_range}
\end{align}

To apply any standard concentration inequality, we next need to calculate the expectation of \ding{173}. According to Monte Carlo sampling and \Cref{lem_clip_err}, for any $1 \le m \le M, 1 \le h \le H$ and $(s, a) \in \SA$, we have
\begin{align*}
    A_{m, h}^{t, \lambda} (s, a) - \frac{\lambda |\cA|}{\e^{U + 1}} \le \E \left[ \wh{A}_{m, h}^{t, \lambda} (s, a) \right] \le  A_{m, h}^{t, \lambda} (s, a).
\end{align*}
Denote $\nabla_t$ as the exact policy gradient at time step $t$, then
\begin{align*}
    \left| \E \left[ g^\top \wh{\nabla}_t \right] - g^\top \nabla_t \right|
    &\le \| g \|_2 \left\| \E \left[ \wh{\nabla}_t \right] - \nabla_t \right\|_2 \\
    &\le \| g \|_2 \cdot H \| \nabla_\theta \ln \pi_\theta (a | s) \|_2 \left\| \E \left[ \wh{A} (s, a) \right] - A (s, a) \right\|_\infty \\
    &\le \frac{2 \lambda G B |\cA|}{\e^{U + 1}}.
\end{align*}
Since Monte Carlo sampling correctly estimates state-action visitation distribution, $\E \left[ \wh{F}_t \right] = F(\theta_t)$. Notice that $g^\top \wh{F}_t g$ is linear in entries of $\wh{F}_t$, we have $\E \left[ g^\top \wh{F}_t g \right] = g^\top F(\theta_t) g$. Now we are in the position to show that
\begin{align*}
    \left| \E \left[ \wh{L}(g) \right] - L (g) \right| \le \frac{4 \lambda G B |\cA|}{\e^{U + 1}}.
\end{align*}

Hoeffding's inequality (\Cref{lem_hoeffding}) gives
\begin{align*}
    \P \left( \left| \wh{L} (g) - \E \left[ \wh{L}(g) \right] \right| \ge \epsilon \right) \le 2 \exp\left( -\frac{2 N \epsilon^2}{C^2} \right).
\end{align*}
where from \Cref{eq_wtL_range},
\begin{align*}
    C = 16 H G B [1 + \lambda U + H (1 + \lambda \ln |\cA|)] + 4 H G^2 B^2.
\end{align*}

After applying union bound for all $t$, with probability $1 - 2 (T + 1) \exp\left( -\frac{2 N \epsilon^2}{C^2} \right)$ the following holds for any $g \in \cG$:
\begin{align*}
    \left| \wh{L} (g; \theta_t, d^{\theta_t}) - L (g; \theta_t, d^{\theta_t}) \right|
    \le \epsilon + \frac{4 \lambda G B |\cA|}{\e^{U + 1}}.
\end{align*}
Hence
\begin{align*}
    L (\wh{g}_t; \theta_t, d^{\theta_t}) 
    &\le \wh{L} (\wh{g}_t; \theta_t, d^{\theta_t}) + \epsilon + \frac{4 \lambda G B |\cA|}{\e^{U + 1}} \\
    &\le \wh{L} (g_t^\star; \theta_t, d^{\theta_t}) + \epsilon + \frac{4 \lambda G B |\cA|}{\e^{U + 1}} \\
    &\le L (g_t^\star; \theta_t, d^{\theta_t}) + 2 \epsilon + \frac{8 \lambda G B |\cA|}{\e^{U + 1}}.
\end{align*}

For $\pi_s \ne$ None and $\lambda = 0$, we notice that $| \wh{A} | \le 2 H$ and hence $- 8 H^2 G B \le \textup{\ding{173}} \le 8 H^2 G B + 4 H G^2 B^2$. Moreover, $\E \left[ \wh{A}_{m, h}^{t, \lambda} (s, a) \right] =  A_{m, h}^{t, \lambda} (s, a)$. So by slightly modifying the proof we can get the result.
\end{proof}

\end{document}